\documentclass{article}

\usepackage[final]{corl_2023} % Use this for the initial submission.

\usepackage{microtype}
\usepackage{amsmath}
\usepackage{amssymb}
\usepackage{amsthm}
\usepackage{graphicx}
\usepackage{wrapfig}
\usepackage{subcaption}
\usepackage{algorithm}
\usepackage{algpseudocode}
\usepackage{booktabs}
\usepackage{multicol}
\usepackage{multirow}
\usepackage{hyperref}

\newtheorem{theorem}{Theorem}[section]
\newtheorem{lemma}[theorem]{Lemma}
\newtheorem{remark}[theorem]{Remark}

\title{A Bayesian Approach to Robust Inverse Reinforcement Learning}

% The \author macro works with any number of authors. There are two
% commands used to separate the names and addresses of multiple
% authors: \And and \AND.
%
% Using \And between authors leaves it to LaTeX to determine where to
% break the lines. Using \AND forces a line break at that point. So,
% if LaTeX puts 3 of 4 authors names on the first line, and the last
% on the second line, try using \AND instead of \And before the third
% author name.

% NOTE: authors will be visible only in the camera-ready and preprint versions (i.e., when using the option 'final' or 'preprint'). 
% 	For the initial submission the authors will be anonymized.

\author{
  Ran Wei\textsuperscript{1}, Siliang Zeng\textsuperscript{2}, Chenliang Li\textsuperscript{1}, Alfredo Garcia\textsuperscript{1}, Anthony McDonald\textsuperscript{3}, Mingyi Hong\textsuperscript{2}\\
  \textsuperscript{1}Texas A\&M University, \textsuperscript{2}University of Minnesota, \textsuperscript{3}University of Wisconsin\\
  \texttt{\{rw422, chenliangli, alfredo.garcia\}@tamu.edu}\\
  \texttt{\{zeng0176, mhong\}@umn.edu, \{admcdonald\}@wisc.edu}
}

\begin{document}
\maketitle

%===============================================================================

\vspace{-2\intextsep}
\begin{abstract}
We consider a Bayesian approach to offline model-based inverse reinforcement learning (IRL). The proposed framework differs from existing offline model-based IRL approaches by performing \emph{simultaneous} estimation of the expert's reward function and subjective model of environment dynamics. We make use of a class of prior distributions which parameterizes how accurate the expert’s model of the environment is to develop efficient algorithms to estimate the expert's reward and subjective dynamics in high-dimensional settings. Our analysis reveals a novel insight that the estimated policy exhibits robust performance when the expert is believed (a priori) to have a highly accurate model of the environment. We verify this observation in the MuJoCo environments and show that our algorithms outperform state-of-the-art offline IRL algorithms.\footnote{\url{https://github.com/ran-weii/bmirl_tf}} 
\end{abstract}

% Two or three meaningful keywords should be added here
\keywords{Inverse Reinforcement Learning, Bayesian Inference, Robustness} 
%===============================================================================

\section{Introduction}
Inverse reinforcement learning (IRL) is the problem of extracting the reward function and policy of a value-maximizing agent from its behavior \cite{ng2000algorithms, osa2018algorithmic}. IRL is an important tool in domains where manually specifying reward functions or policies is difficult, such as in autonomous driving \cite{phan2022driving}, or when the extracted reward function can reveal novel insights about a target population and be used to device interventions, such as in biology, economics, and human-robot interaction studies \cite{yamaguchi2018identification, rust1987optimal, sadigh2018planning}. However, wider applications of IRL face two interrelated algorithmic challenges: 1) having access to the target deployment environment or an accurate simulator thereof and 2) robustness of the learned policy and reward function due to the covariate shift between the training and deployment environments or datasets \cite{ross2010efficient, spencer2021feedback, kuefler2017imitating}. 

\begin{wrapfigure}{r}{0.5\textwidth}
\centering
\vspace{-\intextsep}
\includegraphics[width=0.48\textwidth]{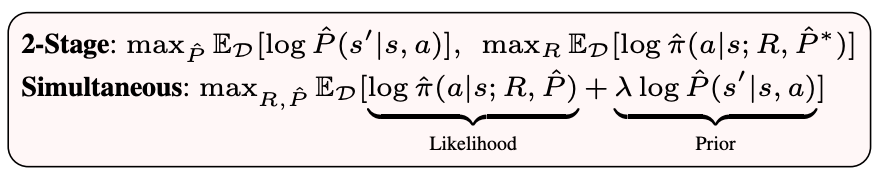}
\caption{Objectives of the traditional two-stage IRL and the proposed simultaneous estimation approach of Bayesian model-based IRL.}
\label{fig:simultaneous}
\vspace{-\intextsep}
\end{wrapfigure}

In this paper, we focus on model-based offline IRL to address challenge 1). A notable class of model-based offline IRL methods estimate the dynamics and reward in a \emph{two-stage} fashion (see Figure \ref{fig:simultaneous}) \cite{rafailov2021visual, das2020model, yue2023clare, zeng2023understanding}. In the first stage, a dynamics model is estimated from the offline dataset. Then, the dynamics model is fixed and used as a simulator to train the reward and policy in the second stage. To overcome covariate shift in the estimated dynamics, recent methods design density estimation-based ``pessimistic" penalties to prevent the learner policy from entering uncertainty regions in the state-action space (i.e., space not covered in the demonstration dataset) \cite{chang2021mitigating, zeng2023understanding, yue2023clare}. 

We instead approach IRL from a Bayesian modeling perspective, where we \emph{simultaneously} estimate the expert's reward function and their \emph{internal} model of the environment dynamics. The core idea is that expert decisions convey their beliefs about the environment \cite{baker2011bayesian, reddy2018you} and thus should affect the update direction of the dynamics model. This Bayesian model of perception and action, related to Bayesian Theory of Mind \cite{baker2011bayesian}, has been used to understand human biases encoded in the internal dynamics in simple and highly constrained domains \cite{jarrett2021inverse, reddy2018you, herman2016inverse, wu2018inverse, makino2012apprenticeship, schmitt2017see, gong2020you, chan2019assistive, desai2018negotiable}. In contrast to these works, we study how a Bayesian model enables learning high-performance and robust policies.

We first propose a class of priors parameterizing how accurate we believe the expert's model of the environment is. We then present our \emph{key observation} that if the expert is believed a priori to have a highly accurate model, robustness emerges naturally in the Bayesian modeling framework by a \emph{simultaneous} estimation approach in which planning is performed against the worst-case dynamics outside the offline data distribution. We then analyze how varying the prior affects the performance of the learner agent and pair our analysis with a set of algorithms which extend previous simultaneous estimation approaches \cite{herman2016inverse, wu2018inverse} to high-dimensional continuous-control settings. We show that the proposed algorithms outperform state-of-the-art (SOTA) offline IRL methods in the MuJoCo environments without the need for designing pessimistic penalties. 

%===============================================================================

\section{Preliminaries}
\subsection{Markov Decision Process}
We consider modeling agent behavior using infinite-horizon \emph{entropy-regularized} Markov decision processes (MDP; \cite{neu2017unified}) defined by tuple $(\mathcal{S}, \mathcal{A}, P, R, \mu, \gamma)$ with state space $\mathcal{S}$, action space $\mathcal{A}$, transition probability distribution $P(s'|s, a) \in \Delta(\mathcal{S})$, reward function $R(s, a) \in \mathbb{R}$, initial state distribution $\mu(s_0) \in \Delta(\mathcal{S})$, and discount factor $\gamma \in (0, 1)$. We denote the discounted occupancy measure as $\rho_{P}^{\pi}(s, a) = \mathbb{E}_{\mu, P, \pi}\left[\sum_{t=0}^{\infty}\gamma^{t}P(s_t=s, a_t=a)\right]$ and the marginal state-action distribution as $d_{P}^{\pi}(s, a) = (1 - \gamma)\rho_{P}^{\pi}(s, a)$. We further denote the discounted occupancy measure starting from a specific state-action pair $(s, a)$ with $\rho_{P}^{\pi}(\Tilde{s}, \Tilde{a}|s, a)$. The agent selects actions from an optimal policy $\pi(a|s) \in \Delta(\mathcal{A})$ that achieves the maximum expected discounted cumulative rewards and policy entropy $\mathcal{H}(\pi(\cdot|s)) = -\sum_{\Tilde{a}}\pi(\Tilde{a}|s)\log\pi(\Tilde{a}|s)$ in the MDP: 
\begin{align}
    \max_{\pi} J_{P}(\pi) = \mathbb{E}_{\mu, P, \pi}\left[\sum_{t=0}^{\infty}\gamma^{t} \Big(R(s_t, a_t) + \mathcal{H}(\pi(\cdot|s_t))\Big)\right]
\end{align}
The optimal policy satisfies the following conditions \cite{haarnoja2018soft}:
\begin{align}\label{eq:value_iteration}
\begin{split}
    \pi(a|s) &\propto \exp\left(Q(s, a)\right) \\
    Q(s, a) &= R(s, a) + \gamma\mathbb{E}_{P(s'|s, a)}\left[V(s')\right] \\
    V(s) &= \log\sum_{a'}\exp\left(Q(s, a')\right)
\end{split}
\end{align}

\subsection{Inverse Reinforcement Learning}
The majority of contemporary IRL approaches have converged on the Maximum Causal Entropy (MCE) IRL framework, which aims to find a reward function $R_{\theta}(s, a)$ with parameters $\theta$ such that the entropy-regularized learner policy $\hat{\pi}$ has matching state-action feature with the unknown expert policy $\pi$ \cite{ziebart2010modeling}. 

A related formulation casts IRL as maximum \emph{discounted} likelihood (ML) estimation \cite{gleave2022primer, zeng2022structural, zeng2022maximum}, subject to the constraint that the policy is entropy-regularized. Given a dataset of $N$ expert trajectories each of length $T$: $\mathcal{D} = \{\tau_{i}\}_{i=1}^{N}, \tau = (s_{1:T}, a_{1:T})$ sampled from the expert policy in environment $P$ with occupancy measure $\rho_{\mathcal{D}} := \rho_{P}^{\pi}$, ML-IRL aims to solve the following optimization problem:
\begin{align}\label{eq:mle_irl}
\begin{split}
    \max_{\theta} \mathbb{E}_{(s_t, a_t) \sim \mathcal{D}}\left[\sum_{t=0}^{T}\gamma^{t}\log \hat{\pi}_{\theta}(a_t|s_t)\right], \quad \textrm{s.t.} \hspace{0.3em} \hat{\pi}_{\theta}(a|s) = \arg\max_{\hat{\pi} \in \Pi}\mathbb{E}_{\rho_{P}^{\hat{\pi}}}\left[R_{\theta}(s, a) + \mathcal{H}(\hat{\pi}(\cdot|s)\right]
\end{split}
\end{align}
where the policy is implicitly parameterized by the reward parameters $\theta$. 

It can be shown that for sufficiently large $T \to \infty$, MCE-IRL and ML-IRL are equivalent under linear reward parameterization \cite{gleave2022primer, zeng2022structural}, however (\ref{eq:mle_irl}) permits non-linear reward parameterization through the following surrogate optimization problem:
\begin{align}\label{eq:gail}
\begin{split}
    \max_{\theta} \mathbb{E}_{\rho_{\mathcal{D}}}\left[R_{\theta}(s, a)\right] - \mathbb{E}_{\rho_{P}^{\hat{\pi}}}\left[R_{\theta}(s, a)\right], \quad \textrm{s.t.} \hspace{0.3em} \hat{\pi}_{\theta}(a|s) = \arg\max_{\hat{\pi} \in \Pi}\mathbb{E}_{\rho_{P}^{\hat{\pi}}}\left[R_{\theta}(s, a) + \mathcal{H}(\hat{\pi}(\cdot|s)\right]
\end{split}
\end{align}

(\ref{eq:gail}) can be efficiently solved via alternating training of the learner policy and the reward function, similar to Generative Adversarial Network (GAN)-based algorithms \cite{ghasemipour2020divergence, ho2016generative, ke2021imitation, finn2016connection, finn2016guided, fu2017learning}. However, these methods all require access to the ground truth environment dynamics or a high quality simulator in order to compute or sample from the learner occupancy measure $\rho_{P}^{\hat{\pi}}$. 

\subsection{Offline Model-Based IRL \& RL}
Existing offline model-based IRL algorithms such as \cite{rafailov2021visual, das2020model} adapt (\ref{eq:gail}) using a two-stage process. First, an estimate $\hat{P}$ of the environment dynamics is obtained from the offline dataset, e.g., using maximum likelihood estimation. Then, $\hat{P}$ is fixed and used in place of $P$ to compute $\rho_{\hat{P}}^{\hat{\pi}}$ while optimizing (\ref{eq:gail}). However, this simple replacement incurs a gap between (\ref{eq:gail}) and (\ref{eq:mle_irl}) which scales with the dynamics model error and the estimated value \cite{zeng2023understanding}. This puts a high demand on the accuracy of the estimated dynamics. 

A related challenge is to prevent the policy from exploiting inaccuracies in the estimated dynamics, which can lead to erroneously high value estimates. This has been extensively studied in both online and offline model-based RL literature \cite{levine2020offline, chua2018deep, janner2019trust, jafferjee2020hallucinating}. The majority of recent offline model-based RL methods combat model-exploitation via a notion of ``pessimism", which penalizes the learner policy from visiting states where the model is likely to be incorrect \cite{levine2020offline}. These pessimistic penalties are often designed based on quantifying uncertainty about transition dynamics through the estimated model \cite{yu2020mopo, kidambi2020morel}. Drawing on these advances, recent offline IRL methods also incorporate pessimistic penalties into their RL subroutine \cite{zeng2023understanding, yue2023clare, chang2021mitigating}. However, designing pessimistic penalties involves nontrivial decisions to ensure that they can accurately capture out-of-distribution samples \cite{lu2021revisiting}.

An alternative approach to avoid model-exploitation is to perform policy training against the worst-case dynamics in out-of-distribution states \cite{uehara2021pessimistic}, similar to robust MDP \cite{nilim2005robust, iyengar2005robust}. \citeauthor{rigter2022rambo} \cite{rigter2022rambo} implemented this idea in the RAMBO algorithm and showed that it is competitive with pessimistic penalty-based approaches while requiring no penalty design and significantly less tuning. We will show that robust MDP corresponds to a sub-problem of IRL under the Bayesian formulation. 

We list additional related work in Appendix \ref{appendix:related_work}.

%===============================================================================

\section{A Bayesian Approach to Model-based IRL}\label{sec:btom}
We consider model-based IRL under a Bayesian framework (BM-IRL), where the observed expert decisions are the results of the unknown reward function $R_{\theta_1}(s, a)$ and their \emph{internal} model of the environment dynamics $\hat{P}_{\theta_2}(s'|s, a)$. 

We denote the concatenated parameters with $\theta = \{\theta_1, \theta_2\}$ and condition the policy on $\theta$ as $\hat{\pi}(a|s; \theta)$ to emphasize that the expert configuration is determined by both the reward and dynamics parameters. 

Upon observing a finite set of expert demonstrations $\mathcal{D}$, BM-IRL aims to compute the posterior distribution $\mathbb{P}(\theta|\mathcal{D})$ given a choice of a prior distribution $\mathbb{P}(\theta)$:
\begin{align}\label{eq:birl}
\begin{split}
    \mathbb{P}(\theta|\mathcal{D}) &\propto \mathbb{P}(\mathcal{D}|\theta)\mathbb{P}(\theta) = \prod_{i=1}^{N}\prod_{t=1}^{T}\hat{\pi}(a_{i,t}|s_{i,t}; \theta)\mathbb{P}(\theta)
\end{split}
\end{align}
where we have omitted $\prod_{i=1}^{N}\prod_{t=1}^{T}P(s_{i,t+1}|s_{i,t},a_{i,t})$ from the likelihood because it does not depend on $\theta$.

We consider a class of prior distributions of the form:
\begin{align}\label{eq:prior}
    \mathbb{P}(\theta) \propto \exp\left(\lambda \sum_{i=1}^{N}\sum_{t=1}^{T}\log \hat{P}_{\theta_2}(s_{i,t+1}|s_{i,t},a_{i,t})\right)
\end{align}
where the prior precision hyperparameter $\lambda$ represents how accurate we believe the expert's model of the environment is.

Let $\mathcal{L}(\theta):= \frac{1}{NT} \log \mathbb{P}(\theta|\mathcal{D})$. It can be easily verified that
\begin{align*}\label{eq:obj_map_irl}
    \mathcal{L}(\theta) = \mathbb{E}_{(s, a, s') \sim \mathcal{D}}\left[\log \hat{\pi}(a|s; \theta) + \lambda \log \hat{P}_{\theta_2}(s'|s, a)\right]
\end{align*}

In this paper, we consider finding a Maximum A Posteriori (MAP) estimate of the BM-IRL model by solving the following bi-level optimization problem:
\begin{align}
\begin{split}
    \max_{\theta} \mathcal{L}(\theta), \quad \textrm{s.t.} \hspace{0.3em} \hat{\pi}(a|s;\theta) = \arg\max_{\hat{\pi} \in \Pi}\mathbb{E}_{\rho_{\hat{P}}^{\hat{\pi}}}\left[R_{\theta}(s, a) + \mathcal{H}(\hat{\pi}(\cdot|s))\right]
\end{split} 
\end{align}

Note that this formulation differs from 
\eqref{eq:mle_irl} and the two-stage approach (see Figure \ref{fig:simultaneous}) because it includes the log likelihood of the \emph{internal} dynamics in the objective (weighted by $\lambda$). 

It should be noted that obtaining the full posterior distribution (or an approximation) is feasible using popular approximate inference methods (e.g., stochastic variational inference or Langevin dynamics \cite{kingma2013auto, welling2011bayesian}) and does not significantly alter the proposed estimation principles and algorithms. 

\subsection{Naive Solution}\label{sec:naive_solution}
% \subsubsection{Problem Setup}
We start by presenting a naive solution to (\ref{eq:obj_map_irl}) which can be seen as an extension of the tabular simultaneous reward-dynamics estimation algorithms \cite{herman2016inverse, wu2018inverse} to the high-dimensional setting. 

Solving (\ref{eq:obj_map_irl}) requires: 1) computing the optimal policy with respect to $\theta$, and 2) computing the gradient $\nabla_{\theta}\log \hat{\pi}(a|s; \theta)$ which requires inverting the policy optimization process itself. Both operations can be done exactly in the tabular setting as in prior works but are intractable in high-dimensional settings. We propose to overcome the intractability using sample-based approximation.

In this section, we focus on approximating the gradient of the policy $\nabla_{\theta}\log \hat{\pi}(a|s; \theta)$ and constructing a surrogate objective similar to (\ref{eq:gail}) to perform simultaneous estimation. We can show that $\nabla_{\theta}\log \hat{\pi}(a|s; \theta)$ has the following form (see Appendix \ref{appendix:proofs} for all proofs and derivations):
\begin{align}\label{eq:naive_policy_gradient}
\begin{split}
    \nabla_{\theta}\log \hat{\pi}(a|s; \theta)
    &=\nabla_{\theta}Q_{\theta}(s, a) - \mathbb{E}_{\Tilde{a} \sim \hat{\pi}(\cdot|s; \theta)}[\nabla_{\theta}Q_{\theta}(s, \Tilde{a})]
\end{split}
\end{align}
where $\nabla_{\theta}Q_{\theta}(s, a) = [\nabla_{\theta_1}Q_{\theta}(s, a), \nabla_{\theta_2}Q_{\theta}(s, a)]$ is the concatenation of reward and dynamics gradients defined as:
\begin{align}
% \begin{split}
    \nabla_{\theta_1}Q_{\theta}(s, a) &= \mathbb{E}_{\rho_{\hat{P}}^{\hat{\pi}}(\Tilde{s}, \Tilde{a}|s, a)}\left[\nabla_{\theta_1}R_{\theta_1}(\Tilde{s}, \Tilde{a})\right] \label{eq:naive_reward_gradient}\\
    \nabla_{\theta_2}Q_{\theta}(s, a) &= \mathbb{E}_{\rho_{\hat{P}}^{\hat{\pi}}(\Tilde{s}, \Tilde{a}|s, a)}\left[\gamma\sum_{s'}V_{\theta}(s')\nabla_{\theta_2}\hat{P}_{\theta_2}(s'|\Tilde{s}, \Tilde{a})\right] \label{eq:naive_value_gradient}
% \end{split}
\end{align}

Given (\ref{eq:naive_reward_gradient}) and (\ref{eq:naive_value_gradient}) are tractable to compute using sample-based approximation of expectations, we construct the following surrogate objective $\Tilde{\mathcal{L}}(\theta)$ with the same gradient as the original MAP estimation problem (\ref{eq:obj_map_irl}):
\begin{align}\label{eq:obj_map_irl_surrogate}
\begin{split}
    \Tilde{\mathcal{L}}(\theta) &= \mathbb{E}_{(s, a) \sim \mathcal{D}}[\mathcal{E}_{\theta}(s, a)] - \mathbb{E}_{s \sim \mathcal{D}, a \sim \hat{\pi}}[\mathcal{E}_{\theta}(s, a)] + \lambda \mathbb{E}_{(s, a, s') \sim \mathcal{D}}[\log \hat{P}_{\theta_2}(s'|s, a)]
\end{split}
\end{align}
where
\begin{align}
% \begin{split}
    \mathcal{E}_{\theta}(s, a) = \mathbb{E}_{\rho_{\hat{P}}^{\hat{\pi}}(\Tilde{s}, \Tilde{a}|s, a)}\left[R_{\theta_1}(\Tilde{s}, \Tilde{a}) + \gamma EV_{\theta}(\Tilde{s}, \Tilde{a})\right], \quad EV_{\theta}(s, a) = \sum_{s'}\hat{P}_{\theta_2}(s'|s, a)V_{\theta}(s') \label{eq:energy_function}
    %\label{eq:ev_function}
% \end{split}
\end{align}
Optimizing (\ref{eq:obj_map_irl_surrogate}) (subject to the same policy constraint) is now the same as optimizing (\ref{eq:obj_map_irl}) but tractable. 

An interesting consequence of maximizing the first two terms of (\ref{eq:obj_map_irl_surrogate}) alone (excluding the prior) is that we both increase the reward and modify the internal dynamics to generate states with higher expected value ($EV$) upon taking expert actions then following the learner policy $\hat{\pi}$, and we do the opposite when taking learner actions. Intuitively, reward and dynamics play complementary roles in determining the value of actions and thus should be regularized \cite{armstrong2018occam, reddy2018you, shah2019feasibility}. Otherwise, one cannot disentangle the effect of truly high reward and falsely optimistic dynamics. Our prior (\ref{eq:prior}) alleviates this unidentifiability to some extent. 

\subsection{Robust BM-IRL}\label{sec:robustness}
We now present our main observation that robustness emerges from the BM-IRL formulation under the dynamics accuracy prior (\ref{eq:prior}). 

We start by analyzing a discounted, full-trajectory version of the BM-IRL likelihood (\ref{eq:obj_map_irl}). Note that discounting does not change the optimal solution to (\ref{eq:obj_map_irl}) under expressive reward and dynamics model class; nor does it require infinite data because we can truncate the summation at $T = \text{int}\left(\frac{1}{1 - \gamma}\right)$ and obtain nearly the same estimator as with infinite sequence length. Using a result from \cite{zeng2023understanding}, we decompose the discounted likelihood as follows (see Appendix \ref{appendix:sec_3.2_proofs}):
\begin{align}\label{eq:dynamics_mismatch}
\begin{split}
    &\mathbb{E}_{P(\tau)}\left[\sum_{t=0}^{\infty}\gamma^{t}\log \hat{\pi}(a_t|s_t; \theta)\right] = \mathbb{E}_{P(\tau)}\left[\sum_{t=0}^{\infty}\gamma^{t}\left(Q_{\theta}(s_t, a_t) - V_{\theta}(s_t)\right)\right] \\
    &= \underbrace{\mathbb{E}_{\rho_{\mathcal{D}}}\bigg[R_{\theta_1}(s_t, a_t)\bigg] - \mathbb{E}_{\mu}\bigg[V_{\theta}(s_0)\bigg]}_{\ell(\theta)} + \underbrace{\gamma\mathbb{E}_{\rho_{\mathcal{D}}}\bigg[\mathbb{E}_{s' \sim \hat{P}_{\theta_2}(\cdot|s_t, a_t)}V_{\theta}(s') - \mathbb{E}_{s'' \sim P(\cdot|s_t, a_t)}V_{\theta}(s'')\bigg]}_{\textbf{T1}} 
\end{split}
\end{align}
where \textbf{T1} corresponds to the value difference under the real and estimated dynamics. We can show that \textbf{T1} is negligible if the estimated dynamics is accurate, e.g. $\mathbb{E}_{(s, a) \sim P(\tau)}D_{KL}(P(\cdot|s, a) || \hat{P}(\cdot|s, a)) \leq \epsilon$ for sufficiently small $\epsilon$,  under the \emph{expert} data distribution, which can be achieved by setting a large $\lambda$. Then, \textbf{T1} can be dropped and the discounted likelihood reduces to $\ell(\theta)$.

$\ell(\theta)$ highlights the reason why the proposed BM-IRL approach can be robust to limited dataset. It poses the offline IRL problem as maximizing the cumulative reward of expert trajectories in the real environment, and minimizing the cumulative reward generated by the learner in the estimated dynamics with respect to \emph{both} reward and dynamics. In other words, it aims to find performance-matching reward and policy under the \emph{worst-case} dynamics, which is trained adversarially outside the offline data distribution. This connects BM-IRL with the robust MDP approach to offline model-based RL \cite{uehara2021pessimistic, rigter2022rambo}. We refer to this version of BM-IRL as robust model-based IRL (RM-IRL). 

\subsection{Proposed Algorithms}
We now propose two scalable algorithms to find the MAP solution to (\ref{eq:obj_map_irl}) via the surrogate objective (\ref{eq:obj_map_irl_surrogate}). The first algorithm (\textbf{BM-IRL}; \ref{algo_btom}) applies the naive solution, while the second algorithm (\textbf{RM-IRL}; \ref{algo_rirl}) exploits the observation in section \ref{sec:robustness} to derive a more efficient algorithm for high $\lambda$.

The estimation problem (\ref{eq:obj_map_irl}) has an inherently nested structure where, for each update of parameters $\theta$ (the outer problem), we have to solve for the optimal policy $\hat{\pi}(a|s; \theta)$ (the inner problem). Following recent ML-IRL approaches \cite{zeng2022structural, zeng2023understanding}, we perform the nested optimization using \emph{two-timescale} stochastic approximation \cite{borkar1997stochastic, hong2020two}, where the inner problem is solved via stochastic gradient updates on a faster time scale than the outer problem. For both algorithms, we solve the inner problem using Model-Based Policy Optimization (MBPO; \cite{janner2019trust}) which uses Soft Actor-Critic (SAC; \cite{haarnoja2018soft}) in a dynamics model ensemble.

\textbf{BM-IRL}: For the BM-IRL outer problem, we estimate the expectations in (\ref{eq:obj_map_irl_surrogate}) and (\ref{eq:energy_function}) via sampling and perform coordinate-ascent optimization. Specifically, for each update step, we first sample a mini-batch of state-action pairs $(s, a) \sim \mathcal{D}$ and a mini-batch of (fake) actions $a_{\text{fake}} \sim \hat{\pi}(\cdot|s; \theta)$ and simulate both $(s, a)$ and $(s, a_{\text{fake}})$ forward in the estimated dynamics $\hat{P}$ to get the real and fake trajectories $\tau_{\text{real}}, \tau_{\text{fake}}$. We then optimize the reward function first by taking a single gradient step to optimize the following objective function:
\begin{align}\label{eq:birl_reward_obj}
\begin{split}
    \max_{\theta_1} & \quad  \mathbb{E}_{(s, a) \sim \mathcal{D}, \rho_{\hat{P}}^{\hat{\pi}}(\Tilde{s}, \Tilde{a}|s, a)}\left[R_{\theta_1}(\Tilde{s}, \Tilde{a})\right] - \mathbb{E}_{s \sim \mathcal{D}, a_{\text{fake}} \sim \hat{\pi}(\cdot|s; \theta), \rho_{\hat{P}}^{\hat{\pi}}(\Tilde{s}, \Tilde{a}|s, a_{\text{fake}})}\left[R_{\theta_1}(\Tilde{s}, \Tilde{a})\right]
\end{split}
\end{align}

Lastly, we optimize the dynamics model by taking a few gradient steps (a hyperparameter) to optimize the following objective function using on-policy rollouts branched from mini-batches of expert state-actions as in RAMBO \cite{rigter2022rambo}: 
\begin{align}\label{eq:birl_dynamics_obj}
\begin{split}
    \max_{\theta_2} &\quad \lambda_1\mathbb{E}_{(s, a) \sim \mathcal{D}, \rho_{\hat{P}}^{\hat{\pi}}(\Tilde{s}, \Tilde{a}|s, a)}\left[EV_{\theta_2}(\Tilde{s}, \Tilde{a})\right] - \lambda_1\mathbb{E}_{s \sim \mathcal{D}, a_{\text{fake}} \sim \hat{\pi}(\cdot|s, ;\theta), \rho_{\hat{P}}^{\hat{\pi}}(\Tilde{s}, \Tilde{a}|s, a_{\text{fake}})}\left[EV_{\theta_2}(\Tilde{s}, \Tilde{a})\right] \\
    &\quad + \lambda_2 \mathbb{E}_{(s, a, s') \sim \mathcal{D}}\left[\log \hat{P}_{\theta_2}(s'|s, a)\right]
\end{split}
\end{align}
where we have introduced weighting coefficients $\lambda_1$ and $\lambda_2$ to facilitate tuning the prior precision $\lambda$ and dynamics model learning rate. We estimate the dynamics gradient using the REINFORCE method with baseline.

\textbf{RM-IRL}: We adapt the BM-IRL algorithm slightly for the RM-IRL outer problem, where we only simulate a single trajectory for each state in the mini-batch and update the reward using the following objective:
\begin{align}\label{eq:rirl_reward_obj}
    \max_{\theta_1}\mathbb{E}_{\rho_{\mathcal{D}}}\left[R_{\theta_1}(s, a)\right] - \mathbb{E}_{\rho_{\hat{P}}^{\hat{\pi}}}\left[R_{\theta_1}(s, a)\right]
\end{align}
For the dynamics update, we set $\lambda_2 \gg \lambda_1$ and drop the first term in (\ref{eq:birl_dynamics_obj}):
\begin{align}\label{eq:rirl_dynamics_obj}
\begin{split}
    \max_{\theta_2} &\quad -\lambda_1 \mathbb{E}_{s \sim \mathcal{D}, a_{\text{fake}} \sim \hat{\pi}(\cdot|s; \theta), \rho_{\hat{P}}^{\hat{\pi}}(\Tilde{s}, \Tilde{a}|s, a_{\text{fake}})}\left[EV_{\theta_2}(\Tilde{s}, \Tilde{a})\right] + \lambda_2 \mathbb{E}_{(s, a, s') \sim \mathcal{D}}\left[\log \hat{P}_{\theta_2}(s'|s, a)\right]
\end{split}
\end{align}

We provide additional details and pseudo code for the proposed algorithms in Appendix \ref{appendix:algo_detail}. 

\subsection{Performance Guarantees}\label{sec:performance_guarantees}
In this section, we study how policy and dynamics estimation error affect learner performance in the real environment. Using lemma 4.1 from \cite{vemula2023virtues} which decomposes the real environment performance gap between the expert and the learner into their policy and model advantages in the estimated dynamics, we arrive at the follow performance bound (see Appendix \ref{appendix:sec_3.4_proofs}):
\begin{theorem}\label{lem:performance_bound}
Let $\epsilon_{\hat{\pi}} = -\mathbb{E}_{(s, a) \sim d_{P}^{\pi}}[\log \hat{\pi}_{\hat{P}}(a|s)]$ be the policy estimation error and $\epsilon_{\hat{P}} = \mathbb{E}_{(s, a) \sim d_{P}^{\pi}}D_{KL}[P(\cdot|s, a) || \hat{P}(\cdot|s, a)]$ be the dynamics estimation error. Let $R_{max} = \max_{s, a}|R_{\theta}(s, a)| + \log |\mathcal{A}|$. Assuming bounded expert-learner marginal state-action density ratio $\left\|\frac{d_{P}^{\hat{\pi}}(s, a)}{d_{P}^{\pi}(s, a)}\right\|_{\infty} \leq C$, we have the following (absolute) performance bound for the IRL agent:
\begin{align}
\begin{split}
    \left|J_{P}(\hat{\pi}) - J_{P}(\pi)\right| \leq \frac{1}{1 - \gamma}\epsilon_{\hat{\pi}} + \frac{\gamma(C + 1) R_{\text{max}}}{(1 - \gamma)^2}\sqrt{2\epsilon_{\hat{P}}}
\end{split}
\end{align}
\end{theorem}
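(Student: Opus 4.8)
The plan is to bound $|J_P(\hat{\pi}) - J_P(\pi)|$ by invoking the performance decomposition of \cite{vemula2023virtues} (their lemma 4.1), which writes the real-environment performance gap between two policies in terms of a \emph{policy advantage} term and a \emph{model advantage} term, both evaluated under the estimated dynamics $\hat{P}$. First I would apply that lemma with the expert $\pi$ and the learner $\hat{\pi}$, so that the gap splits as (i) a term measuring how much better $\pi$'s actions look than $\hat{\pi}$'s actions under the $\hat{P}$-value function, integrated against the expert's real-environment occupancy $d_P^\pi$, and (ii) a term measuring the discrepancy between $P$ and $\hat{P}$ integrated against the learner's occupancy (or a mix of the two occupancies), weighted by the $\hat{P}$-value function. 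The two resulting pieces will be controlled by $\epsilon_{\hat\pi}$ and $\epsilon_{\hat P}$ respectively.

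For the policy-advantage term, I would use that $\hat{\pi}$ is (soft-)optimal in $\hat{P}$, so its soft advantage $A_{\hat P}(s,a) = Q_{\hat P}(s,a) - V_{\hat P}(s)$ equals $\log\hat\pi(a|s)$ by the optimality conditions \eqref{eq:value_iteration}. Hence the policy-advantage term equals $\mathbb{E}_{(s,a)\sim d_P^\pi}[\log\hat\pi(a|s)] = -\epsilon_{\hat\pi}$ up to sign, and after the standard $\tfrac{1}{1-\gamma}$ occupancy-to-value rescaling this contributes the $\tfrac{1}{1-\gamma}\epsilon_{\hat\pi}$ term. (If the soft-optimality slack of $\hat\pi$ is not exactly zero, I would absorb it into $\epsilon_{\hat\pi}$; since $\epsilon_{\hat\pi}$ is defined as a cross-entropy it already upper-bounds the relevant quantity.) For the model-advantage term, I would bound $|\mathbb{E}_{s'\sim P(\cdot|s,a)}V_{\hat P}(s') - \mathbb{E}_{s'\sim\hat P(\cdot|s,a)}V_{\hat P}(s')| \le \|V_{\hat P}\|_\infty \cdot \|P(\cdot|s,a) - \hat P(\cdot|s,a)\|_1 \le \tfrac{R_{\max}}{1-\gamma}\sqrt{2 D_{KL}(P\|\hat P)(s,a)}$ by Hölder plus Pinsker, using $\|V_{\hat P}\|_\infty \le R_{\max}/(1-\gamma)$. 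This expectation is taken under the learner occupancy $d_P^{\hat\pi}$; I change measure to the expert occupancy $d_P^\pi$ using the bounded density ratio $C$, apply Jensen to pull the square root outside the expectation, and identify $\mathbb{E}_{d_P^\pi}D_{KL} = \epsilon_{\hat P}$. Accounting for the extra $\gamma$ from the one-step lookahead, one $\tfrac{1}{1-\gamma}$ from $\|V_{\hat P}\|_\infty$ and one more $\tfrac{1}{1-\gamma}$ from occupancy rescaling, plus the $(C+1)$ from combining the two occupancy measures, yields $\tfrac{\gamma(C+1)R_{\max}}{(1-\gamma)^2}\sqrt{2\epsilon_{\hat P}}$. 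Summing the two contributions and taking absolute values gives the claim.

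The main obstacle I expect is bookkeeping rather than anything deep: getting the $(C+1)$ constant and the exact powers of $(1-\gamma)$ right, which requires being careful about which occupancy measure each term in the \cite{vemula2023virtues} decomposition is integrated against and how the change of measure is applied (the $+1$ presumably comes from a term that is already under $d_P^\pi$ and needs no reweighting, while the $C$ comes from the term under $d_P^{\hat\pi}$). A secondary subtlety is the entropy regularization: $J_P$ includes the policy-entropy bonus, so $R_{\max}$ is defined as $\max|R_\theta| + \log|\mathcal{A}|$ precisely so that the soft value function stays bounded by $R_{\max}/(1-\gamma)$; I would double-check that the entropy terms in the decomposition are handled consistently (the policy-advantage term is the \emph{soft} advantage, and the entropy of $\hat\pi$ is folded into $V_{\hat P}$). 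Finally I would confirm that applying Jensen to move $\sqrt{\cdot}$ outside the expectation is done in the correct direction (concavity of $\sqrt{\cdot}$ gives $\mathbb{E}\sqrt{X} \le \sqrt{\mathbb{E}X}$, which is the direction we need for an upper bound).
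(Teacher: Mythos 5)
Your proposal is correct and follows essentially the same route as the paper: it invokes Lemma 4.1 of \cite{vemula2023virtues} to split the real-environment gap into the soft policy advantage (which equals the expected $\log\hat{\pi}$ under $d_{P}^{\pi}$ and yields $\tfrac{1}{1-\gamma}\epsilon_{\hat{\pi}}$) plus two model-advantage terms, each bounded via H\"older, Pinsker, Jensen, and $\|V\|_{\infty}\le R_{\max}/(1-\gamma)$, with the $(C+1)$ arising exactly as you surmise from the term under $d_{P}^{\hat{\pi}}$ (reweighted by $C$) and the term under $d_{P}^{\pi}$ (no reweighting). No gaps.
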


This bound shows that the performance gap between the IRL agent and the expert is linear with respect to the discount factor in the policy estimation error and quadratic in the dynamics estimation error. Thus, ensuring small dynamics estimation error is essential and prior simultaneous estimation approaches such as \cite{herman2016inverse} which do not explicitly encourage dynamics accuracy are not expected to perform well in general. 

\section{Experiments}
We aim to answer the following questions with our experiments: 1) How does the dynamics accuracy prior affect agent behavior? 2) How well does BM-IRL and RM-IRL perform compared to SOTA offline IRL algorithms? We investigate Q1 using a Gridworld environment. We investigate Q2 using the standard D4RL dataset on MuJoCo continuous control benchmarks.

\subsection{Gridworld Example}
To understand the behavior of the BM-IRL algorithm, we use a 5x5 deterministic gridworld environment where the expert travels from the lower left to the upper right corner using a policy with a discount factor of $\gamma=0.7$. We represent the reward function as the log probability of the target state: $\log \Tilde{P}(s)$, where the upper right corner has a target probability of 1. 

We parameterize both reward and dynamics using softmax of logits. Using 100 expert trajectories of length 50, we trained 3 BM-IRL agents with $\lambda \in \{0.001, 0.5, 10\}$. As a comparison, we also trained a two-stage IRL agent whose dynamics model was fixed after an initial maximum likelihood pretraining step and its reward was estimated using the same gradient update rule as BM-IRL in (\ref{eq:birl_reward_obj}). 

The ground truth and estimated target state probabilities are shown in the first row of Figure \ref{fig:gridworld}. All agents correctly estimated that the upper right corner has the highest reward, although not with the same precision as the ground truth sparse reward. BM-IRL agents with $\lambda=0.5$ and $\lambda=10$ were able to assign high reward only to states close to the true goal state, where as the BM-IRL agent with $\lambda=0.001$ and the two-stage IRL agent assigned high rewards to state much further away from the true goal state.

\begin{figure*}[!htb]
    \centering
    \vspace{-\intextsep}
    \includegraphics[width=1\textwidth]{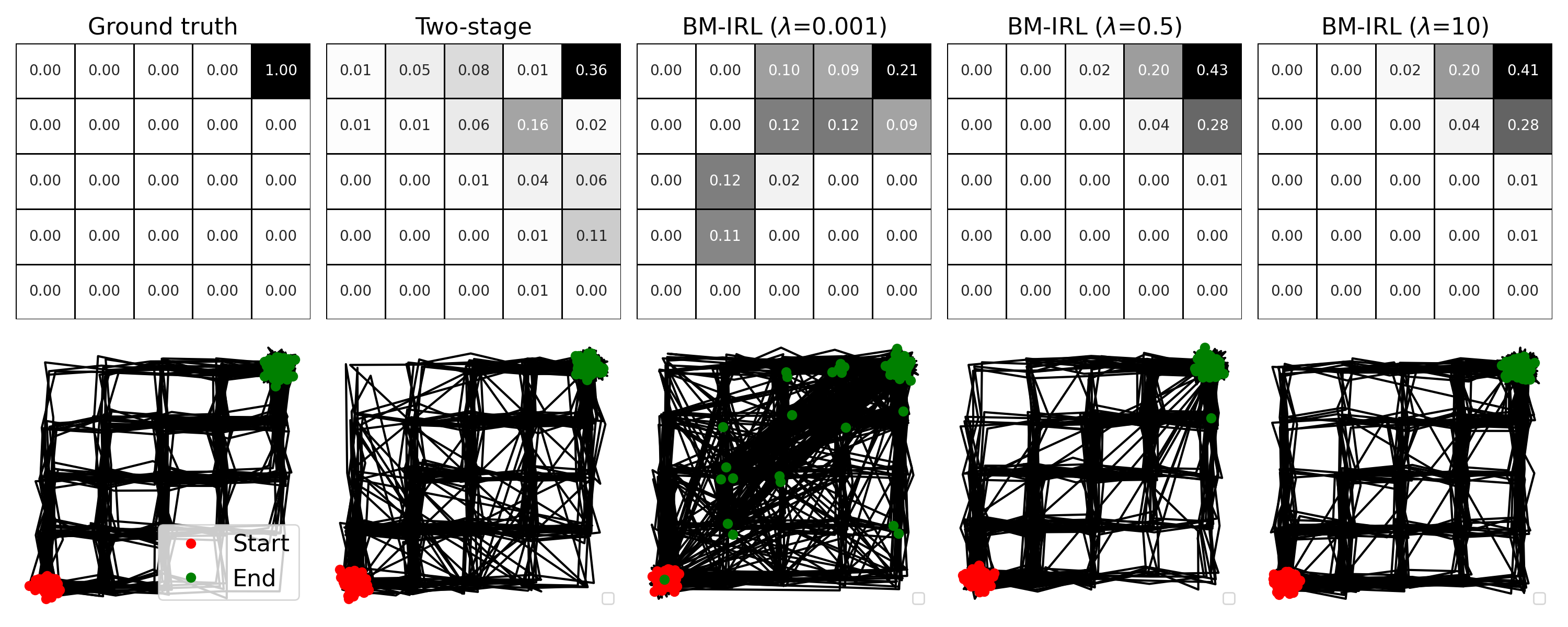}
    \caption{Gridworld experiment results.  Ground truth and estimated target state distributions (softmax of reward; \textbf{Row 1}) and sample path of estimated policy in estimated dynamics (\textbf{Row 2}) for two-stage and BM-IRL agents with $\lambda=[0.001, 0.5, 10]$. BM-IRL agents with higher $\lambda$ obtain more accurate reward estimates and commit fewer illegal transitions.}
    \label{fig:gridworld}
    \vspace{-\intextsep}
\end{figure*}

We visualize the estimated dynamics models by sampling 100 imagined rollouts using the estimated policies in the second row of Figure \ref{fig:gridworld}. This figure shows that the BM-IRL($\lambda=0.001$) agent, which corresponds roughly to \citeauthor{herman2016inverse}'s simultaneous estimation algorithm where no (or a very weak) prior is used \cite{herman2016inverse}, and the two-stage IRL agent would take significantly more illegal transitions (i.e., diagonal transitions) than BM-IRL agent with higher $\lambda$. Comparing among BM-IRL agents, we see that increasing $\lambda$ decreases the number of illegal transitions. In contrast to the two-stage IRL agent whose illegal transitions are rather random, the illegal transitions generated by BM-IRL agents with lower $\lambda$ have a strong tendency to point towards the goal state. This corroborates with our analysis that BM-IRL learns optimistic dynamics under the expert distribution. 

\subsection{MuJoCo Benchmarks}\label{sec:mujoco_benchmark}
\begin{figure}[!t]
\centering
\vspace{-\intextsep}
\begin{subfigure}[b]{0.32\textwidth}
    \centering
    \includegraphics[width=\textwidth]{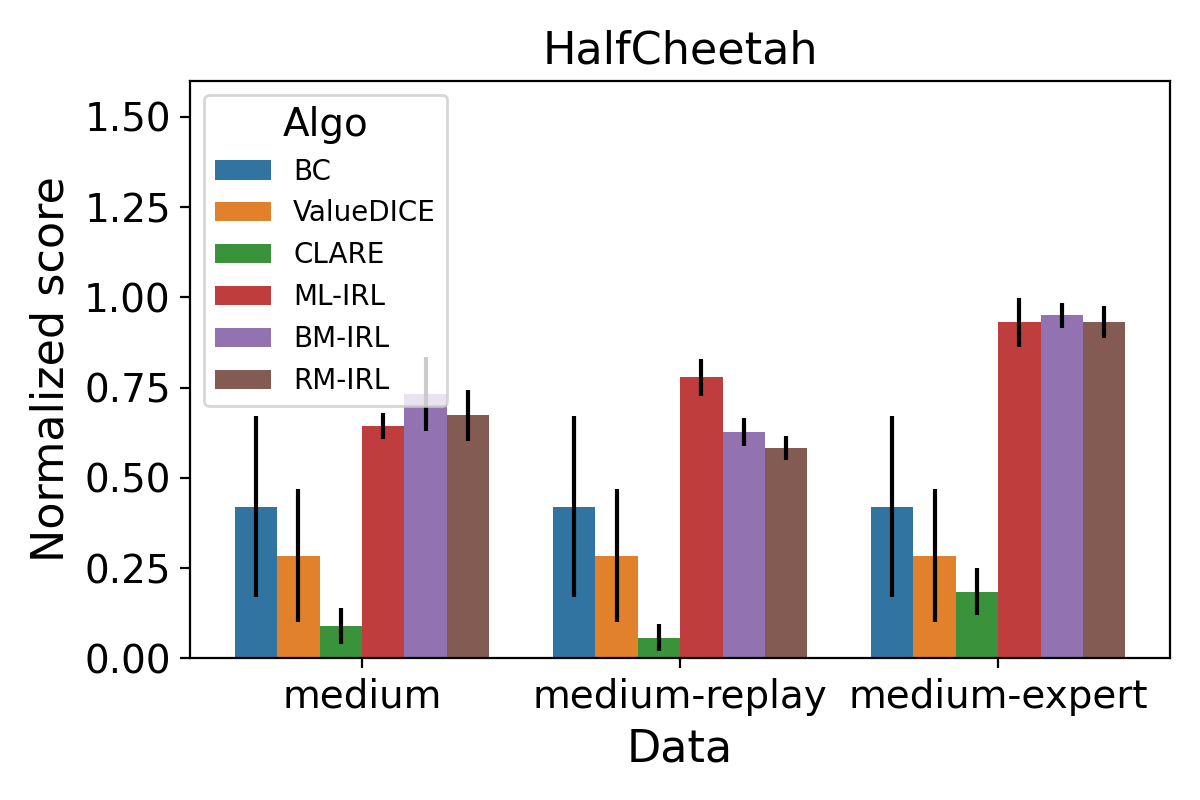}
\end{subfigure}
\begin{subfigure}[b]{0.32\textwidth}
    \centering
    \includegraphics[width=\textwidth]{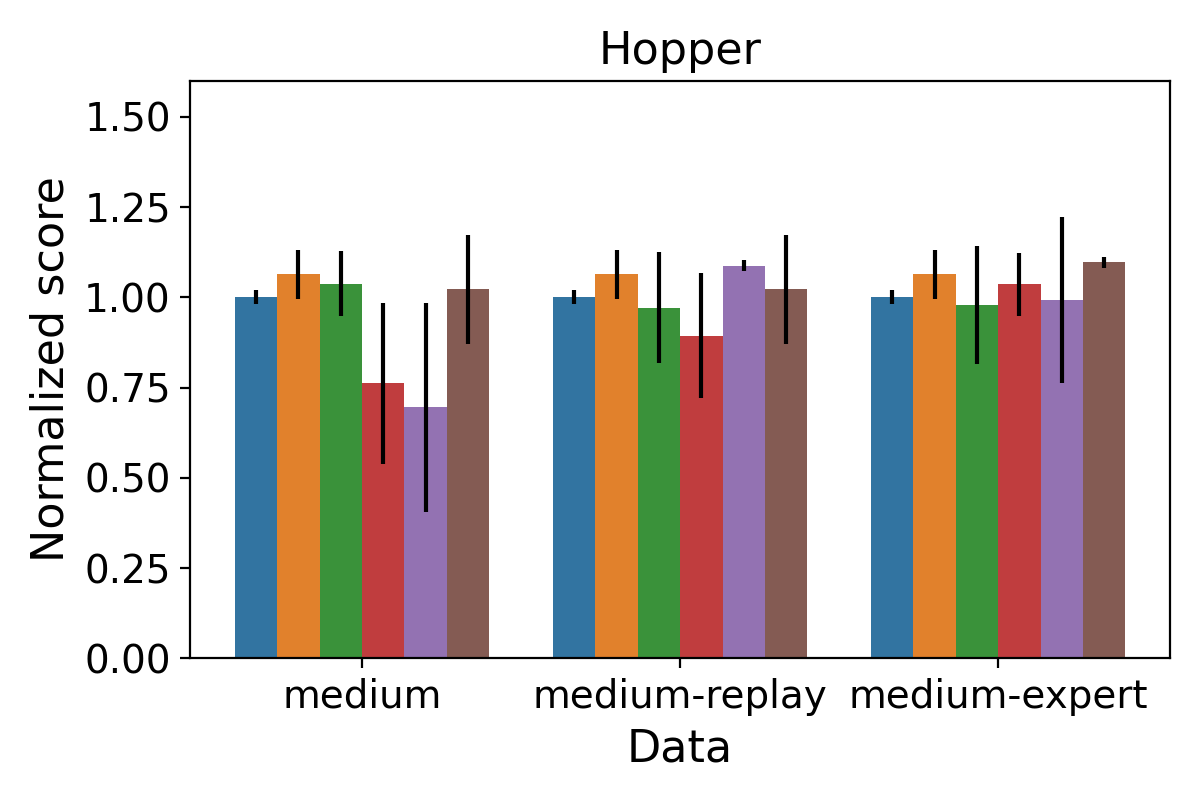}
\end{subfigure}
\begin{subfigure}[b]{0.32\textwidth}
    \centering
    \includegraphics[width=\textwidth]{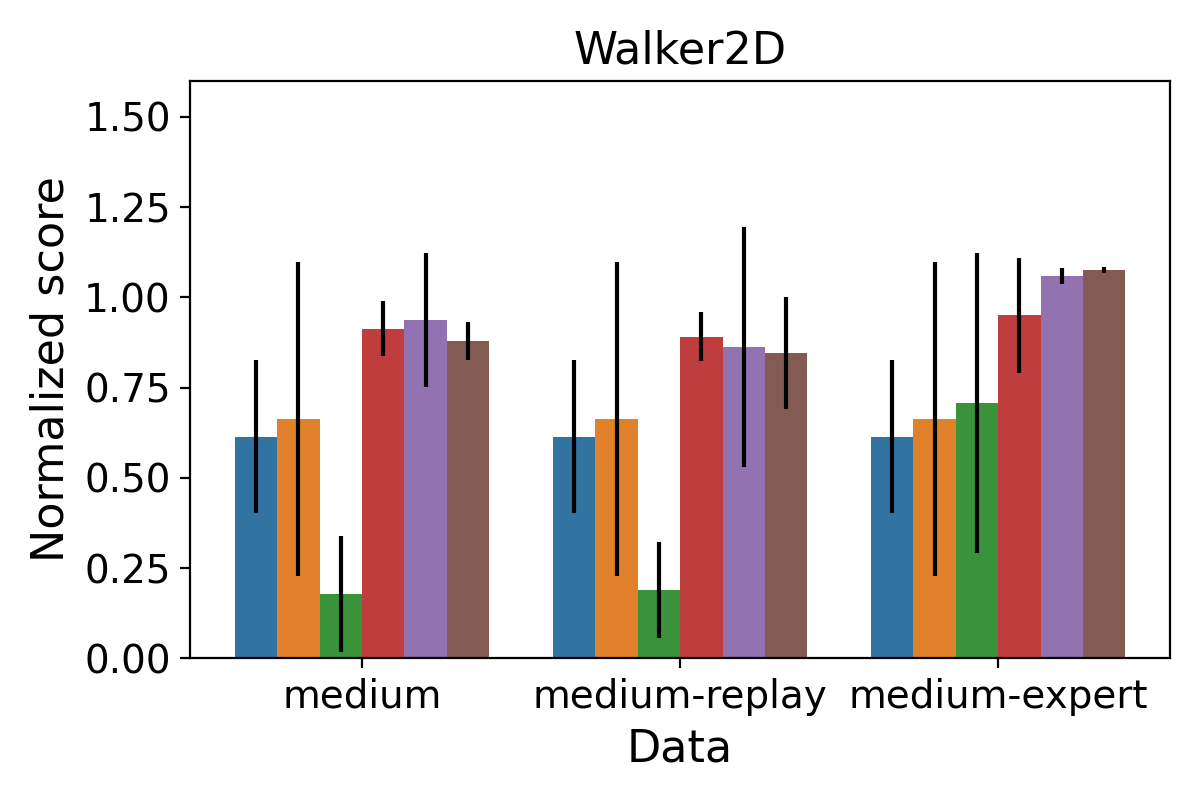}
\end{subfigure}
\caption{MuJoCo benchmark performance using 10 expert trajectories from the D4RL dataset. Bar heights and error bars represent the means and standard deviations of normalized scores, respectively, over 5 random seeds. Baseline algorithm performances are taken from \cite{zeng2023understanding}.}
\label{fig:mujoco}
\vspace{-\intextsep}
\end{figure}

We study the performance of the proposed algorithms in the MuJoCo continuous control benchmark \cite{todorov2012mujoco} provided by the D4RL dataset \cite{fu2020d4rl}. Following prior IRL evaluation protocols, our IRL agents maintain two datasets: 1) a \emph{transition dataset} is used to trained the dynamics model and the actor-critic networks and 2) an \emph{expert dataset} is used to train the reward function. The transition dataset is selected from one of the following: medium, medium-replay, and medium-expert, where medium-expert has the highest quality with the most complete coverage of expert trajectories. We fill the expert dataset with 10 randomly sampled D4RL expert trajectories. For each evaluation environment (HalfCheetah, Hopper, Walker2D) and transition dataset, we train our algorithms offline for a fixed number of epochs and repeat this process for 5 random seeds. After the final epoch, we evaluate the agent for 10 episodes in the corresponding environment. For both BM-IRL and RM-IRL, we set $\lambda_1 = 0.01, \lambda_2=1$ to encourage an accurate dynamics model under the offline data distribution. We provide additional implementation and experimental details in Appendix \ref{appendix:implementation}. 

For the baseline algorithms, we choose Behavior Cloning (BC) and ValueDICE \cite{kostrikov2019imitation}, both of which are model-free offline imitation learning algorithms and they do not use the transition dataset. For SOTA offline model-based IRL algorithms, we compare with the following: 1) ML-IRL uses a similar two-timescale algorithm but operates in a two-stage fashion with a pessimistic penalty which adapts to the learner policy's state-action visitation \cite{zeng2023understanding}, 2) CLARE performs marginal density matching with a mixture of expert and transition datasets weighted by the in-distributionness of non-expert state-action pairs \cite{yue2023clare}. We expect our algorithms to outperform CLARE whose marginal density matching method requires a larger expert dataset to work well as shown in \cite{zeng2023understanding, yue2023clare}. We also expect to perform similarly to ML-IRL, but better when adversarial model training is more appropriate than ensemble-based pessimistic penalty for a given dataset. Lastly, we expect BM-IRL to perform better than RM-IRL since BM-IRL solves the simultaneous estimation problem exactly.

Figure \ref{fig:mujoco} shows the results of our algorithms and the comparisons reported in \cite{zeng2023understanding}. In the Hopper environment, all algorithms performed similarly on all datasets, except for BM-IRL and ML-IRL having lower performance with much larger variance on the medium dataset. In the HalfCheetah and Walker2D environments, our algorithms and ML-IRL performed significantly better. BM-IRL and RM-IRL outperformed other algorithms in 6/9 settings, and in 2/9 settings, the differences from the best algorithm were very small. The only case where the best algorithm, ML-IRL, significantly outperformed our algorithms was HalfCheetach medium-replay. This is mostly likely because in this environment, the medium-replay dataset has much less coverage on expert trajectories so that adversarial model training might have been too conservative. 

As we expected, BM-IRL outperformed RM-IRL in 7/9 settings. The only case where its mean performance was significantly lower was Hopper medium where BM-IRL performance had large variance. However, as we explain in the section \ref{sec:limitations}, this is because BM-IRL has higher training instability where its peak performance was in fact on par with RM-IRL. 

\section{Limitations}\label{sec:limitations}
A limitation of the proposed algorithms is that BM-IRL can have less stable training dynamics than RM-IRL where its evaluation performance may alternate between periods of near optimal performance and periods of medium performance (thus the larger variance in Figure \ref{fig:mujoco}). While stability is a known issue for training energy-based models using contrastive divergence objectives (i.e., objective (\ref{eq:obj_map_irl_surrogate})) \cite{du2020improved}, we believe the current issue is related to BM-IRL's two-sample path method having weaker and noisier learning signal. Another source of instability is likely introduced by simultaneously training the dynamics model, which may be improved in future work by adding Lipschitz regularizations \cite{asadi2018lipschitz}. 

\section{Conclusion}
We showed that inverse reinforcement learning under a Bayesian simultaneous estimation framework gives rise to robust policies. This yielded a set of novel offline model-based IRL algorithms achieving SOTA performance in the MuJoCo continuous control benchmarks without ad hoc pessimistic penalty design. An interesting future direction is to identify appropriate priors to robustly infer reward and internal dynamics from sub-optimal and biased human demonstrators.

%===============================================================================

%===============================================================================

\clearpage
% The acknowledgments are automatically included only in the final and preprint versions of the paper.
\acknowledgments{A. Garcia would like to acknowledge partial support from the Army Research Office grant W911NF-22-1-0213. M. Hong and S. Zeng are supported by NSF grant CIF-1910385. The authors would also like to acknowledge Marc Rigter for answering questions about the RAMBO algorithm.}

%===============================================================================

% no \bibliographystyle is required, since the corl style is automatically used.
\bibliography{ref}  % .bib

\begin{thebibliography}{67}
\providecommand{\natexlab}[1]{#1}
\providecommand{\url}[1]{\texttt{#1}}
\expandafter\ifx\csname urlstyle\endcsname\relax
  \providecommand{\doi}[1]{doi: #1}\else
  \providecommand{\doi}{doi: \begingroup \urlstyle{rm}\Url}\fi

\bibitem[Ng et~al.(2000)Ng, Russell, et~al.]{ng2000algorithms}
A.~Y. Ng, S.~Russell, et~al.
\newblock Algorithms for inverse reinforcement learning.
\newblock In \emph{Icml}, volume~1, page~2, 2000.

\bibitem[Osa et~al.(2018)Osa, Pajarinen, Neumann, Bagnell, Abbeel, Peters,
  et~al.]{osa2018algorithmic}
T.~Osa, J.~Pajarinen, G.~Neumann, J.~A. Bagnell, P.~Abbeel, J.~Peters, et~al.
\newblock An algorithmic perspective on imitation learning.
\newblock \emph{Foundations and Trends{\textregistered} in Robotics},
  7\penalty0 (1-2):\penalty0 1--179, 2018.

\bibitem[Phan-Minh et~al.(2022)Phan-Minh, Howington, Chu, Lee, Tomov, Li,
  Dicle, Findler, Suarez-Ruiz, Beaudoin, et~al.]{phan2022driving}
T.~Phan-Minh, F.~Howington, T.-S. Chu, S.~U. Lee, M.~S. Tomov, N.~Li, C.~Dicle,
  S.~Findler, F.~Suarez-Ruiz, R.~Beaudoin, et~al.
\newblock Driving in real life with inverse reinforcement learning.
\newblock \emph{arXiv preprint arXiv:2206.03004}, 2022.

\bibitem[Yamaguchi et~al.(2018)Yamaguchi, Naoki, Ikeda, Tsukada, Nakano, Mori,
  and Ishii]{yamaguchi2018identification}
S.~Yamaguchi, H.~Naoki, M.~Ikeda, Y.~Tsukada, S.~Nakano, I.~Mori, and S.~Ishii.
\newblock Identification of animal behavioral strategies by inverse
  reinforcement learning.
\newblock \emph{PLoS computational biology}, 14\penalty0 (5):\penalty0
  e1006122, 2018.

\bibitem[Rust(1987)]{rust1987optimal}
J.~Rust.
\newblock Optimal replacement of gmc bus engines: An empirical model of harold
  zurcher.
\newblock \emph{Econometrica: Journal of the Econometric Society}, pages
  999--1033, 1987.

\bibitem[Sadigh et~al.(2018)Sadigh, Landolfi, Sastry, Seshia, and
  Dragan]{sadigh2018planning}
D.~Sadigh, N.~Landolfi, S.~S. Sastry, S.~A. Seshia, and A.~D. Dragan.
\newblock Planning for cars that coordinate with people: leveraging effects on
  human actions for planning and active information gathering over human
  internal state.
\newblock \emph{Autonomous Robots}, 42:\penalty0 1405--1426, 2018.

\bibitem[Ross and Bagnell(2010)]{ross2010efficient}
S.~Ross and D.~Bagnell.
\newblock Efficient reductions for imitation learning.
\newblock In \emph{Proceedings of the thirteenth international conference on
  artificial intelligence and statistics}, pages 661--668. JMLR Workshop and
  Conference Proceedings, 2010.

\bibitem[Spencer et~al.(2021)Spencer, Choudhury, Venkatraman, Ziebart, and
  Bagnell]{spencer2021feedback}
J.~Spencer, S.~Choudhury, A.~Venkatraman, B.~Ziebart, and J.~A. Bagnell.
\newblock Feedback in imitation learning: The three regimes of covariate shift.
\newblock \emph{arXiv preprint arXiv:2102.02872}, 2021.

\bibitem[Kuefler et~al.(2017)Kuefler, Morton, Wheeler, and
  Kochenderfer]{kuefler2017imitating}
A.~Kuefler, J.~Morton, T.~Wheeler, and M.~Kochenderfer.
\newblock Imitating driver behavior with generative adversarial networks.
\newblock In \emph{2017 IEEE Intelligent Vehicles Symposium (IV)}, pages
  204--211. IEEE, 2017.

\bibitem[Rafailov et~al.(2021)Rafailov, Yu, Rajeswaran, and
  Finn]{rafailov2021visual}
R.~Rafailov, T.~Yu, A.~Rajeswaran, and C.~Finn.
\newblock Visual adversarial imitation learning using variational models.
\newblock \emph{Advances in Neural Information Processing Systems},
  34:\penalty0 3016--3028, 2021.

\bibitem[Das et~al.(2020)Das, Bechtle, Davchev, Jayaraman, Rai, and
  Meier]{das2020model}
N.~Das, S.~Bechtle, T.~Davchev, D.~Jayaraman, A.~Rai, and F.~Meier.
\newblock Model-based inverse reinforcement learning from visual
  demonstrations.
\newblock \emph{arXiv preprint arXiv:2010.09034}, 2020.

\bibitem[Yue et~al.(2023)Yue, Wang, Shao, Zhang, Lin, Ren, and
  Zhang]{yue2023clare}
S.~Yue, G.~Wang, W.~Shao, Z.~Zhang, S.~Lin, J.~Ren, and J.~Zhang.
\newblock Clare: Conservative model-based reward learning for offline inverse
  reinforcement learning.
\newblock \emph{arXiv preprint arXiv:2302.04782}, 2023.

\bibitem[Zeng et~al.(2023)Zeng, Li, Garcia, and Hong]{zeng2023understanding}
S.~Zeng, C.~Li, A.~Garcia, and M.~Hong.
\newblock Understanding expertise through demonstrations: A maximum likelihood
  framework for offline inverse reinforcement learning.
\newblock \emph{arXiv preprint arXiv:2302.07457}, 2023.

\bibitem[Chang et~al.(2021)Chang, Uehara, Sreenivas, Kidambi, and
  Sun]{chang2021mitigating}
J.~Chang, M.~Uehara, D.~Sreenivas, R.~Kidambi, and W.~Sun.
\newblock Mitigating covariate shift in imitation learning via offline data
  with partial coverage.
\newblock \emph{Advances in Neural Information Processing Systems},
  34:\penalty0 965--979, 2021.

\bibitem[Baker et~al.(2011)Baker, Saxe, and Tenenbaum]{baker2011bayesian}
C.~Baker, R.~Saxe, and J.~Tenenbaum.
\newblock Bayesian theory of mind: Modeling joint belief-desire attribution.
\newblock In \emph{Proceedings of the annual meeting of the cognitive science
  society}, volume~33, 2011.

\bibitem[Reddy et~al.(2018)Reddy, Dragan, and Levine]{reddy2018you}
S.~Reddy, A.~D. Dragan, and S.~Levine.
\newblock Where do you think you're going?: Inferring beliefs about dynamics
  from behavior.
\newblock \emph{arXiv preprint arXiv:1805.08010}, 2018.

\bibitem[Jarrett et~al.(2021)Jarrett, H{\"u}y{\"u}k, and Van
  Der~Schaar]{jarrett2021inverse}
D.~Jarrett, A.~H{\"u}y{\"u}k, and M.~Van Der~Schaar.
\newblock Inverse decision modeling: Learning interpretable representations of
  behavior.
\newblock In \emph{International Conference on Machine Learning}, pages
  4755--4771. PMLR, 2021.

\bibitem[Herman et~al.(2016)Herman, Gindele, Wagner, Schmitt, and
  Burgard]{herman2016inverse}
M.~Herman, T.~Gindele, J.~Wagner, F.~Schmitt, and W.~Burgard.
\newblock Inverse reinforcement learning with simultaneous estimation of
  rewards and dynamics.
\newblock In \emph{Artificial Intelligence and Statistics}, pages 102--110.
  PMLR, 2016.

\bibitem[Wu et~al.(2018)Wu, Schrater, and Pitkow]{wu2018inverse}
Z.~Wu, P.~Schrater, and X.~Pitkow.
\newblock Inverse rational control: Inferring what you think from how you
  forage.
\newblock \emph{arXiv preprint arXiv:1805.09864}, 2018.

\bibitem[Makino and Takeuchi(2012)]{makino2012apprenticeship}
T.~Makino and J.~Takeuchi.
\newblock Apprenticeship learning for model parameters of partially observable
  environments.
\newblock \emph{arXiv preprint arXiv:1206.6484}, 2012.

\bibitem[Schmitt et~al.(2017)Schmitt, Bieg, Herman, and
  Rothkopf]{schmitt2017see}
F.~Schmitt, H.-J. Bieg, M.~Herman, and C.~A. Rothkopf.
\newblock I see what you see: Inferring sensor and policy models of human
  real-world motor behavior.
\newblock In \emph{Thirty-First AAAI Conference on Artificial Intelligence},
  2017.

\bibitem[Gong and Zhang(2020)]{gong2020you}
Z.~Gong and Y.~Zhang.
\newblock What is it you really want of me? generalized reward learning with
  biased beliefs about domain dynamics.
\newblock In \emph{Proceedings of the AAAI Conference on Artificial
  Intelligence}, volume~34, pages 2485--2492, 2020.

\bibitem[Chan et~al.(2019)Chan, Hadfield-Menell, Srinivasa, and
  Dragan]{chan2019assistive}
L.~Chan, D.~Hadfield-Menell, S.~Srinivasa, and A.~Dragan.
\newblock The assistive multi-armed bandit.
\newblock In \emph{2019 14th ACM/IEEE International Conference on Human-Robot
  Interaction (HRI)}, pages 354--363. IEEE, 2019.

\bibitem[Desai et~al.(2018)Desai, Critch, and Russell]{desai2018negotiable}
N.~Desai, A.~Critch, and S.~J. Russell.
\newblock Negotiable reinforcement learning for pareto optimal sequential
  decision-making.
\newblock \emph{Advances in Neural Information Processing Systems}, 31, 2018.

\bibitem[Neu et~al.(2017)Neu, Jonsson, and G{\'o}mez]{neu2017unified}
G.~Neu, A.~Jonsson, and V.~G{\'o}mez.
\newblock A unified view of entropy-regularized markov decision processes.
\newblock \emph{arXiv preprint arXiv:1705.07798}, 2017.

\bibitem[Haarnoja et~al.(2018)Haarnoja, Zhou, Abbeel, and
  Levine]{haarnoja2018soft}
T.~Haarnoja, A.~Zhou, P.~Abbeel, and S.~Levine.
\newblock Soft actor-critic: Off-policy maximum entropy deep reinforcement
  learning with a stochastic actor.
\newblock In \emph{International conference on machine learning}, pages
  1861--1870. PMLR, 2018.

\bibitem[Ziebart(2010)]{ziebart2010modeling}
B.~D. Ziebart.
\newblock \emph{Modeling purposeful adaptive behavior with the principle of
  maximum causal entropy}.
\newblock Carnegie Mellon University, 2010.

\bibitem[Gleave and Toyer(2022)]{gleave2022primer}
A.~Gleave and S.~Toyer.
\newblock A primer on maximum causal entropy inverse reinforcement learning.
\newblock \emph{arXiv preprint arXiv:2203.11409}, 2022.

\bibitem[Zeng et~al.(2022{\natexlab{a}})Zeng, Hong, and
  Garcia]{zeng2022structural}
S.~Zeng, M.~Hong, and A.~Garcia.
\newblock Structural estimation of markov decision processes in
  high-dimensional state space with finite-time guarantees.
\newblock \emph{arXiv preprint arXiv:2210.01282}, 2022{\natexlab{a}}.

\bibitem[Zeng et~al.(2022{\natexlab{b}})Zeng, Li, Garcia, and
  Hong]{zeng2022maximum}
S.~Zeng, C.~Li, A.~Garcia, and M.~Hong.
\newblock Maximum-likelihood inverse reinforcement learning with finite-time
  guarantees.
\newblock \emph{arXiv preprint arXiv:2210.01808}, 2022{\natexlab{b}}.

\bibitem[Ghasemipour et~al.(2020)Ghasemipour, Zemel, and
  Gu]{ghasemipour2020divergence}
S.~K.~S. Ghasemipour, R.~Zemel, and S.~Gu.
\newblock A divergence minimization perspective on imitation learning methods.
\newblock In \emph{Conference on Robot Learning}, pages 1259--1277. PMLR, 2020.

\bibitem[Ho and Ermon(2016)]{ho2016generative}
J.~Ho and S.~Ermon.
\newblock Generative adversarial imitation learning.
\newblock \emph{Advances in neural information processing systems}, 29, 2016.

\bibitem[Ke et~al.(2021)Ke, Choudhury, Barnes, Sun, Lee, and
  Srinivasa]{ke2021imitation}
L.~Ke, S.~Choudhury, M.~Barnes, W.~Sun, G.~Lee, and S.~Srinivasa.
\newblock Imitation learning as f-divergence minimization.
\newblock In \emph{International Workshop on the Algorithmic Foundations of
  Robotics}, pages 313--329. Springer, 2021.

\bibitem[Finn et~al.(2016{\natexlab{a}})Finn, Christiano, Abbeel, and
  Levine]{finn2016connection}
C.~Finn, P.~Christiano, P.~Abbeel, and S.~Levine.
\newblock A connection between generative adversarial networks, inverse
  reinforcement learning, and energy-based models.
\newblock \emph{arXiv preprint arXiv:1611.03852}, 2016{\natexlab{a}}.

\bibitem[Finn et~al.(2016{\natexlab{b}})Finn, Levine, and
  Abbeel]{finn2016guided}
C.~Finn, S.~Levine, and P.~Abbeel.
\newblock Guided cost learning: Deep inverse optimal control via policy
  optimization.
\newblock In \emph{International conference on machine learning}, pages 49--58.
  PMLR, 2016{\natexlab{b}}.

\bibitem[Fu et~al.(2017)Fu, Luo, and Levine]{fu2017learning}
J.~Fu, K.~Luo, and S.~Levine.
\newblock Learning robust rewards with adversarial inverse reinforcement
  learning.
\newblock \emph{arXiv preprint arXiv:1710.11248}, 2017.

\bibitem[Levine et~al.(2020)Levine, Kumar, Tucker, and Fu]{levine2020offline}
S.~Levine, A.~Kumar, G.~Tucker, and J.~Fu.
\newblock Offline reinforcement learning: Tutorial, review, and perspectives on
  open problems.
\newblock \emph{arXiv preprint arXiv:2005.01643}, 2020.

\bibitem[Chua et~al.(2018)Chua, Calandra, McAllister, and Levine]{chua2018deep}
K.~Chua, R.~Calandra, R.~McAllister, and S.~Levine.
\newblock Deep reinforcement learning in a handful of trials using
  probabilistic dynamics models.
\newblock \emph{Advances in neural information processing systems}, 31, 2018.

\bibitem[Janner et~al.(2019)Janner, Fu, Zhang, and Levine]{janner2019trust}
M.~Janner, J.~Fu, M.~Zhang, and S.~Levine.
\newblock When to trust your model: Model-based policy optimization.
\newblock \emph{Advances in Neural Information Processing Systems}, 32, 2019.

\bibitem[Jafferjee et~al.(2020)Jafferjee, Imani, Talvitie, White, and
  Bowling]{jafferjee2020hallucinating}
T.~Jafferjee, E.~Imani, E.~Talvitie, M.~White, and M.~Bowling.
\newblock Hallucinating value: A pitfall of dyna-style planning with imperfect
  environment models.
\newblock \emph{arXiv preprint arXiv:2006.04363}, 2020.

\bibitem[Yu et~al.(2020)Yu, Thomas, Yu, Ermon, Zou, Levine, Finn, and
  Ma]{yu2020mopo}
T.~Yu, G.~Thomas, L.~Yu, S.~Ermon, J.~Y. Zou, S.~Levine, C.~Finn, and T.~Ma.
\newblock Mopo: Model-based offline policy optimization.
\newblock \emph{Advances in Neural Information Processing Systems},
  33:\penalty0 14129--14142, 2020.

\bibitem[Kidambi et~al.(2020)Kidambi, Rajeswaran, Netrapalli, and
  Joachims]{kidambi2020morel}
R.~Kidambi, A.~Rajeswaran, P.~Netrapalli, and T.~Joachims.
\newblock Morel: Model-based offline reinforcement learning.
\newblock \emph{Advances in neural information processing systems},
  33:\penalty0 21810--21823, 2020.

\bibitem[Lu et~al.(2021)Lu, Ball, Parker-Holder, Osborne, and
  Roberts]{lu2021revisiting}
C.~Lu, P.~Ball, J.~Parker-Holder, M.~Osborne, and S.~J. Roberts.
\newblock Revisiting design choices in offline model based reinforcement
  learning.
\newblock In \emph{International Conference on Learning Representations}, 2021.

\bibitem[Uehara and Sun(2021)]{uehara2021pessimistic}
M.~Uehara and W.~Sun.
\newblock Pessimistic model-based offline reinforcement learning under partial
  coverage.
\newblock \emph{arXiv preprint arXiv:2107.06226}, 2021.

\bibitem[Nilim and El~Ghaoui(2005)]{nilim2005robust}
A.~Nilim and L.~El~Ghaoui.
\newblock Robust control of markov decision processes with uncertain transition
  matrices.
\newblock \emph{Operations Research}, 53\penalty0 (5):\penalty0 780--798, 2005.

\bibitem[Iyengar(2005)]{iyengar2005robust}
G.~N. Iyengar.
\newblock Robust dynamic programming.
\newblock \emph{Mathematics of Operations Research}, 30\penalty0 (2):\penalty0
  257--280, 2005.

\bibitem[Rigter et~al.(2022)Rigter, Lacerda, and Hawes]{rigter2022rambo}
M.~Rigter, B.~Lacerda, and N.~Hawes.
\newblock Rambo-rl: Robust adversarial model-based offline reinforcement
  learning.
\newblock \emph{arXiv preprint arXiv:2204.12581}, 2022.

\bibitem[Kingma and Welling(2013)]{kingma2013auto}
D.~P. Kingma and M.~Welling.
\newblock Auto-encoding variational bayes.
\newblock \emph{arXiv preprint arXiv:1312.6114}, 2013.

\bibitem[Welling and Teh(2011)]{welling2011bayesian}
M.~Welling and Y.~W. Teh.
\newblock Bayesian learning via stochastic gradient langevin dynamics.
\newblock In \emph{Proceedings of the 28th international conference on machine
  learning (ICML-11)}, pages 681--688, 2011.

\bibitem[Armstrong and Mindermann(2018)]{armstrong2018occam}
S.~Armstrong and S.~Mindermann.
\newblock Occam's razor is insufficient to infer the preferences of irrational
  agents.
\newblock \emph{Advances in neural information processing systems}, 31, 2018.

\bibitem[Shah et~al.(2019)Shah, Gundotra, Abbeel, and
  Dragan]{shah2019feasibility}
R.~Shah, N.~Gundotra, P.~Abbeel, and A.~Dragan.
\newblock On the feasibility of learning, rather than assuming, human biases
  for reward inference.
\newblock In \emph{International Conference on Machine Learning}, pages
  5670--5679. PMLR, 2019.

\bibitem[Borkar(1997)]{borkar1997stochastic}
V.~S. Borkar.
\newblock Stochastic approximation with two time scales.
\newblock \emph{Systems \& Control Letters}, 29\penalty0 (5):\penalty0
  291--294, 1997.

\bibitem[Hong et~al.(2020)Hong, Wai, Wang, and Yang]{hong2020two}
M.~Hong, H.-T. Wai, Z.~Wang, and Z.~Yang.
\newblock A two-timescale framework for bilevel optimization: Complexity
  analysis and application to actor-critic.
\newblock \emph{arXiv preprint arXiv:2007.05170}, 2020.

\bibitem[Vemula et~al.(2023)Vemula, Song, Singh, Bagnell, and
  Choudhury]{vemula2023virtues}
A.~Vemula, Y.~Song, A.~Singh, J.~A. Bagnell, and S.~Choudhury.
\newblock The virtues of laziness in model-based rl: A unified objective and
  algorithms.
\newblock \emph{arXiv preprint arXiv:2303.00694}, 2023.

\bibitem[Todorov et~al.(2012)Todorov, Erez, and Tassa]{todorov2012mujoco}
E.~Todorov, T.~Erez, and Y.~Tassa.
\newblock Mujoco: A physics engine for model-based control.
\newblock In \emph{2012 IEEE/RSJ international conference on intelligent robots
  and systems}, pages 5026--5033. IEEE, 2012.

\bibitem[Fu et~al.(2020)Fu, Kumar, Nachum, Tucker, and Levine]{fu2020d4rl}
J.~Fu, A.~Kumar, O.~Nachum, G.~Tucker, and S.~Levine.
\newblock D4rl: Datasets for deep data-driven reinforcement learning.
\newblock \emph{arXiv preprint arXiv:2004.07219}, 2020.

\bibitem[Kostrikov et~al.(2019)Kostrikov, Nachum, and
  Tompson]{kostrikov2019imitation}
I.~Kostrikov, O.~Nachum, and J.~Tompson.
\newblock Imitation learning via off-policy distribution matching.
\newblock \emph{arXiv preprint arXiv:1912.05032}, 2019.

\bibitem[Du et~al.(2020)Du, Li, Tenenbaum, and Mordatch]{du2020improved}
Y.~Du, S.~Li, J.~Tenenbaum, and I.~Mordatch.
\newblock Improved contrastive divergence training of energy based models.
\newblock \emph{arXiv preprint arXiv:2012.01316}, 2020.

\bibitem[Asadi et~al.(2018)Asadi, Misra, and Littman]{asadi2018lipschitz}
K.~Asadi, D.~Misra, and M.~Littman.
\newblock Lipschitz continuity in model-based reinforcement learning.
\newblock In \emph{International Conference on Machine Learning}, pages
  264--273. PMLR, 2018.

\bibitem[Ramachandran and Amir(2007)]{ramachandran2007bayesian}
D.~Ramachandran and E.~Amir.
\newblock Bayesian inverse reinforcement learning.
\newblock In \emph{IJCAI}, volume~7, pages 2586--2591, 2007.

\bibitem[Choi and Kim(2011)]{choi2011map}
J.~Choi and K.-E. Kim.
\newblock Map inference for bayesian inverse reinforcement learning.
\newblock \emph{Advances in neural information processing systems}, 24, 2011.

\bibitem[Chan and van~der Schaar(2021)]{chan2021scalable}
A.~J. Chan and M.~van~der Schaar.
\newblock Scalable bayesian inverse reinforcement learning.
\newblock \emph{arXiv preprint arXiv:2102.06483}, 2021.

\bibitem[Kwon et~al.(2020)Kwon, Daptardar, Schrater, and
  Pitkow]{kwon2020inverse}
M.~Kwon, S.~Daptardar, P.~R. Schrater, and X.~Pitkow.
\newblock Inverse rational control with partially observable continuous
  nonlinear dynamics.
\newblock \emph{Advances in neural information processing systems},
  33:\penalty0 7898--7909, 2020.

\bibitem[Lambert et~al.(2020)Lambert, Amos, Yadan, and
  Calandra]{lambert2020objective}
N.~Lambert, B.~Amos, O.~Yadan, and R.~Calandra.
\newblock Objective mismatch in model-based reinforcement learning.
\newblock \emph{arXiv preprint arXiv:2002.04523}, 2020.

\bibitem[Grimm et~al.(2020)Grimm, Barreto, Singh, and Silver]{grimm2020value}
C.~Grimm, A.~Barreto, S.~Singh, and D.~Silver.
\newblock The value equivalence principle for model-based reinforcement
  learning.
\newblock \emph{Advances in Neural Information Processing Systems},
  33:\penalty0 5541--5552, 2020.

\bibitem[Farahmand et~al.(2017)Farahmand, Barreto, and
  Nikovski]{farahmand2017value}
A.-m. Farahmand, A.~Barreto, and D.~Nikovski.
\newblock Value-aware loss function for model-based reinforcement learning.
\newblock In \emph{Artificial Intelligence and Statistics}, pages 1486--1494.
  PMLR, 2017.

\bibitem[Haarnoja et~al.(2018)Haarnoja, Zhou, Hartikainen, Tucker, Ha, Tan,
  Kumar, Zhu, Gupta, Abbeel, et~al.]{haarnoja2018softv2}
T.~Haarnoja, A.~Zhou, K.~Hartikainen, G.~Tucker, S.~Ha, J.~Tan, V.~Kumar,
  H.~Zhu, A.~Gupta, P.~Abbeel, et~al.
\newblock Soft actor-critic algorithms and applications.
\newblock \emph{arXiv preprint arXiv:1812.05905}, 2018.

\end{thebibliography}

\clearpage

\begin{center}
\textbf{\large Supplemental Materials for: A Bayesian Approach to Robust Inverse Reinforcement Learning}
\end{center}

\appendix

\section{Additional Related Work}\label{appendix:related_work}
\textbf{Bayesian IRL}: \citeauthor{ramachandran2007bayesian} \cite{ramachandran2007bayesian} first proposed a Bayesian formulation of IRL to solve the reward ambiguity problem. A MAP inference approach was proposed in \cite{choi2011map} and a variational inference approach was proposed in \cite{chan2021scalable}. Their formulations considers non-entropy-regularized policies and the dynamics model is fixed during reward inference. In contrast, simultaneous estimation of reward and dynamics can potentially infer the demonstrator's biased beliefs about the environment, which is desirable for psychology and human-robot interaction studies \cite{baker2011bayesian, wu2018inverse, reddy2018you}. Despite the attractiveness, simultaneous estimation is challenging because of the need to invert the agent's planning process, especially in continuous domains. \citeauthor{reddy2018you} \cite{reddy2018you} avoids this by representing agent discrete action policies using neural network-parameterized $Q$ functions and regularizing the Bellman error to be small over the entire state-action space. This method however cannot be straightforwardly adapted to the continuous action case. \citeauthor{kwon2020inverse} \cite{kwon2020inverse} avoids this by first training a task-conditioned policy on a distribution of environments with known parameters using meta reinforcement learning and then use the meta-trained policy to guide parameter inference. This precludes the method from being used in general settings with unknown task distributions. To our knowledge, our proposed algorithms are the first to address simultaneous estimation in general environments. 

\textbf{Decision-aware model learning}: Decision-aware model learning aims to solve the objective mismatch problem in model-based RL \cite{lambert2020objective}. Many proposed methods in this class use value-targeted regression similar to our model loss in (\ref{eq:birl_dynamics_obj}) \cite{grimm2020value, farahmand2017value}. Our analysis and that of \citeauthor{vemula2023virtues} \cite{vemula2023virtues} suggest that value-targeted model objectives may be related to robust objectives. Furthermore, since the set of value-equivalent models only shrink for increasingly larger set of policy and values \cite{grimm2020value}, using value-aware model objective alone may not be optimal and additional prediction-based regularizations may be needed.

\section{Missing Proofs}\label{appendix:proofs}

\subsection{Proofs For Section \ref{sec:naive_solution}}\label{appendix:sec_3.1_proofs}

\textbf{Derivation of BM-IRL Gradients (section \ref{sec:naive_solution}).} Recall the definition of the optimal entropy-regularized policy and value functions:
\begin{align}
\begin{split}
    \hat{\pi}(a|s; \theta) &= \frac{\exp(Q_{\theta}(s, a))}{\sum_{\Tilde{a}}\exp(Q_{\theta}(s, \Tilde{a}))} \\
    Q_{\theta}(s, a) &= R_{\theta_1}(s, a) + \gamma\mathbb{E}_{s' \sim \hat{P}_{\theta_2}(\cdot|s, a)}[V_{\theta}(s')] \\
    V_{\theta}(s) &= \log \sum_{\Tilde{a}}\exp(Q_{\theta}(s, \Tilde{a}))
\end{split}
\end{align}
The gradient of the policy log likelihood in terms of the $Q$ function gradient is obtained as follows:
\begin{align}\label{eq:naive_policy_gradient_appx}
\begin{split}
    \nabla_{\theta}\log \hat{\pi}(a|s; \theta) &= \nabla_{\theta}Q_{\theta}(s, a) - \nabla_{\theta}V_{\theta}(s) \\
    &= \nabla_{\theta}Q_{\theta}(s, a) - \frac{1}{Z_{\theta}}\nabla_{\theta}\sum_{\Tilde{a}}\exp(Q_{\theta}(s, \Tilde{a})) \\
    &= \nabla_{\theta}Q_{\theta}(s, a) - \frac{1}{Z_{\theta}}\sum_{\Tilde{a}}\exp(\nabla_{\theta}Q_{\theta}(s, \Tilde{a})) \\
    &= \nabla_{\theta}Q_{\theta}(s, a) - \mathbb{E}_{\Tilde{a} \sim \hat{\pi}(\cdot|s; \theta)}[\nabla_{\theta}Q_{\theta}(s, \Tilde{a})]
\end{split}
\end{align}
where $Z_{\theta} = \sum_{a'}\exp(Q_{\theta}(s, a'))$ is the normalizer.

Recall $\rho_{\hat{P}}^{\hat{\pi}}(\Tilde{s}, \Tilde{a}|s, a)$ is the discounted state-action occupancy measure starting from pair $(s, a)$. We define for any function $f(s, a)$:
\begin{align}
    \mathbb{E}_{\rho_{\hat{P}}^{\hat{\pi}}(\Tilde{s}, \Tilde{a}|s, a)}[f(s, a)] = \mathbb{E}_{\tau \sim (\hat{P}, \hat{\pi})}\left[\sum_{t=0}^{\infty}\gamma^{t}f(s, a)\bigg|s_0=s, a_0=a\right]
\end{align}

We now derive $Q$ function gradients with respect to the reward parameters $\theta_1$ and dynamics parameters $\theta_2$, respectively.
\begin{align}
\begin{split}
    \nabla_{\theta_1}Q_{\theta}(s, a) &= \nabla_{\theta_1}R_{\theta_1}(s, a) + \gamma\mathbb{E}_{s' \sim \hat{P}_{\theta_2}(\cdot|s, a)}[\nabla_{\theta_1}V_{\theta}(s')] \\
    &= \nabla_{\theta_1}R_{\theta_1}(s, a) + \gamma\mathbb{E}_{s' \sim \hat{P}_{\theta_2}(\cdot|s, a), a' \sim \hat{\pi}(\cdot|s'; \theta)}[\nabla_{\theta_1}Q_{\theta}(s', a')] \\
    &= \nabla_{\theta_1}R_{\theta_1}(s, a) + \gamma\mathbb{E}_{s' \sim \hat{P}_{\theta_2}(\cdot|s, a), a' \sim \hat{\pi}(\cdot|s'; \theta)}\bigg[\\
    &\quad \nabla_{\theta_1}R_{\theta_1}(s', a') + \gamma\mathbb{E}_{s'' \sim \hat{P}_{\theta_2}(\cdot|s', a'), a'' \sim \hat{\pi}(\cdot|s''; \theta)}[\nabla_{\theta_1}Q_{\theta}(s'', a'')]\bigg] \\
    &= \nabla_{\theta_1}R_{\theta_1}(s, a) + \mathbb{E}_{\tau \sim (\hat{P}, \hat{\pi} )}\left[\sum_{h=1}^{\infty}\gamma^{h}\nabla_{\theta_1}R_{\theta_1}(s_h, a_h) \bigg| s_0 = s, a_0 = a \right] \\
    &= \mathbb{E}_{\rho_{\hat{P}}^{\hat{\pi}}(\Tilde{s}, \Tilde{a}|s, a)}\left[\nabla_{\theta_1}R_{\theta_1}(\Tilde{s}, \Tilde{a})\right]
\end{split}
\end{align}
In line two we used the result that $\nabla_{\phi}V_{\theta}(s)$ for both $\phi = \theta_1$ and $\phi = \theta_2$ corresponds to the second term in (\ref{eq:naive_policy_gradient_appx}) .

\begin{align}
\begin{split}
    \nabla_{\theta_2}Q_{\theta}(s, a) &= \nabla_{\theta_2}R_{\theta_1}(s, a) + \nabla_{\theta_2}\gamma\mathbb{E}_{s' \sim \hat{P}_{\theta_2}(\cdot|s, a)}[V_{\theta}(s')] \\
    &= \gamma\sum_{\Tilde{s}}V_{\theta}(\Tilde{s})\nabla_{\theta_2}\hat{P}_{\theta_2}(\Tilde{s}|s, a) + \gamma\mathbb{E}_{s' \sim \hat{P}_{\theta_2}(\cdot|s, a), a' \sim \hat{\pi}(\cdot|s'; \theta)}[\nabla_{\theta_2}Q_{\theta}(s', a')] \\
    &= \gamma\sum_{\Tilde{s}}V_{\theta}(\Tilde{s})\nabla_{\theta_2}\hat{P}_{\theta_2}(\Tilde{s}|s, a) + \gamma\mathbb{E}_{s' \sim \hat{P}_{\theta_2}(\cdot|s, a), a' \sim \hat{\pi}(\cdot|s'; \theta)}\bigg[\\
    &\quad \gamma\sum_{\Tilde{s}}V_{\theta}(\Tilde{s})\nabla_{\theta_2}\hat{P}_{\theta_2}(\Tilde{s}|s', a') + \gamma\mathbb{E}_{s'' \sim \hat{P}_{\theta_2}(\cdot|s', a'), a'' \sim \hat{\pi}(\cdot|s''; \theta)}[\nabla_{\theta_2}Q_{\theta}(s'', a'')] \bigg]  \\
    &= \gamma\sum_{\Tilde{s}}V_{\theta}(\Tilde{s})\nabla_{\theta_2}\hat{P}_{\theta_2}(\Tilde{s}|s, a) + \mathbb{E}_{\tau \sim (\hat{P}, \hat{\pi})}\left[\sum_{h=1}^{\infty}\gamma^{h+1}\sum_{\Tilde{s}}V_{\theta}(\Tilde{s})\nabla_{\theta_2}\hat{P}_{\theta_2}(\Tilde{s}|s_h, a_h)\bigg|s_0=s, a_0=a\right] \\
    &= \mathbb{E}_{\rho_{\hat{P}}^{\hat{\pi}}(\Tilde{s}, \Tilde{a}|s, a)}\left[\gamma\sum_{s'}V_{\theta}(s')\nabla_{\theta_2}\hat{P}_{\theta_2}(s'|\Tilde{s}, \Tilde{a})\right]
\end{split}
\end{align}

We make a quick remark on the identifiability of simultaneous estimation. 
\begin{remark}\label{rmk:simultaneous_unidentifiability}
Simultaneous reward-dynamics estimation of the form (\ref{eq:birl}) without specific assumptions on the prior $P(\theta)$ is in general unidentifiable. 
\end{remark}
\begin{proof}
Let $\mathbf{R} \in \mathbb{R}^{|\mathcal{S}||\mathcal{A}|}$ and $\mathbf{P} \in \mathbb{R}_{+}^{|\mathcal{S}||\mathcal{A}| \times |\mathcal{S}|}, \sum_{s'}\mathbf{P}_{ss'}^{a} = 1$, $\mathbf{Q} \in \mathbb{R}^{|\mathcal{S}||\mathcal{A}|}$ and $\mathbf{V} \in \mathbb{R}^{|\mathcal{S}|}$ be a set of Bellman-consistent reward, dynamics, and value functions in matrix form. Let $\mathbf{P}' \neq \mathbf{P}$ be an alternative dynamics model. We can always find an alternative reward $\mathbf{R}' = \mathbf{R} + \Delta\mathbf{R}$, where:
\begin{align}
\begin{split}
    \Delta\mathbf{R} &= (\mathbf{Q} - \mathbf{Q}) - \gamma(\mathbf{P}'\mathbf{V} - \mathbf{P}\mathbf{V}) \\
    &= -\gamma\Delta\mathbf{P}\mathbf{V}
\end{split}
\end{align}
 without changing the value functions and optimal entropy-regularized policy.
\end{proof}

Remark \ref{rmk:simultaneous_unidentifiability} implies that existing simultaneous estimation approaches which do not use explicit or implicit regularizations, such as the SERD algorithm by \cite{herman2016inverse}, cannot in general accurately estimate expert reward. Paired with theorem \ref{lem:performance_bound}, it shows that these algorithms cannot in general achieve good performance. 

\subsection{Proofs For Section \ref{sec:robustness}}\label{appendix:sec_3.2_proofs}

\textbf{Derivation of discounted likelihood (\ref{eq:dynamics_mismatch}).} 
\begin{align}
\begin{split}
    &\mathbb{E}_{P(\tau)}\left[\sum_{t=0}^{\infty}\gamma^{t}\log \hat{\pi}(a_t|s_t; \theta)\right] \\
    &= \mathbb{E}_{P(\tau)}\left[\sum_{t=0}^{\infty}\gamma^{t}\left(Q_{\theta}(s_t, a_t) - V_{\theta}(s_t)\right)\right] \\
    &= \mathbb{E}_{P(\tau)}\left[\sum_{t=0}^{\infty}\gamma^{t}\left(R_{\theta_1}(s_t, a_t) + \gamma\mathbb{E}_{s' \sim \hat{P}_{\theta_2}(\cdot|s_t, a_t)}[V_{\theta}(s')]\right)\right] - \mathbb{E}_{P(\tau)}\left[\sum_{t=0}^{\infty}\gamma^{t}V_{\theta}(s_t)\right] \\
    &= \mathbb{E}_{P(\tau)}\left[\sum_{t=0}^{\infty}\gamma^{t}R_{\theta_1}(s_t, a_t)\right] - \mathbb{E}_{\mu}\bigg[V_{\theta}(s_0)\bigg] \\
    &\quad + \mathbb{E}_{P(\tau)}\left[\sum_{t=0}^{\infty}\gamma^{t+1}\mathbb{E}_{s' \sim \hat{P}_{\theta}(\cdot|s_t, a_t)}[V_{\theta}(s')]\right] - \mathbb{E}_{P(\tau)}\left[\sum_{t=1}^{\infty}\gamma^{t}V_{\theta}(s_t)\right] \\
    &= \mathbb{E}_{P(\tau)}\left[\sum_{t=0}^{\infty}\gamma^{t}R_{\theta_1}(s_t, a_t)\right] - \mathbb{E}_{\mu}\bigg[V_{\theta}(s_0)\bigg] \\
    &\quad + \mathbb{E}_{P(\tau)}\left[\sum_{t=0}^{\infty}\gamma^{t+1}\mathbb{E}_{s' \sim \hat{P}_{\theta}(\cdot|s_t, a_t)}[V_{\theta}(s')]\right] - \mathbb{E}_{P(\tau)}\left[\sum_{t=0}^{\infty}\gamma^{t+1}\mathbb{E}_{s' \sim P(\cdot|s_t, a_t)}[V_{\theta}(s')]\right] \\
    &= \underbrace{\mathbb{E}_{\rho_{P}^{\pi}}\bigg[R_{\theta_1}(s_t, a_t)\bigg] - \mathbb{E}_{\mu}\bigg[V_{\theta}(s_0)\bigg]}_{\ell(\theta)} + \underbrace{\gamma\mathbb{E}_{\rho_{P}^{\pi}}\bigg[\mathbb{E}_{s' \sim \hat{P}_{\theta}(\cdot|s_t, a_t)}V_{\theta}(s') - \mathbb{E}_{s'' \sim P(\cdot|s_t, a_t)}V_{\theta}(s'')\bigg]}_{\textbf{T1}} 
\end{split}
\end{align}

The following lemma shows that \textbf{T1} is negligible if the estimated dynamics is accurate under the \emph{expert} distribution, which is available from the offline dataset.

\begin{lemma}\label{lem:mismatch_error}
Let $\epsilon = \mathbb{E}_{(s, a) \sim P(\tau)}D_{KL}(P(\cdot|s, a) || \hat{P}(\cdot|s, a))$ and $R_{max} = \max_{s, a}|R_{\theta}(s, a)| + \log |\mathcal{A}|$, it holds that
\begin{align}
    |\textbf{T1}| \leq \frac{\gamma R_{\text{max}}}{(1 - \gamma)^{2}}\sqrt{2\epsilon}
\end{align}
\end{lemma}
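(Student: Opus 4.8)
The plan is to bound \textbf{T1} by first recognizing it as a $\gamma$-weighted expected difference of $V_\theta$ under two transition kernels, then controlling that difference by the total variation distance between $P$ and $\hat P$, and finally converting total variation to KL via Pinsker's inequality together with Jensen. Concretely, writing $\textbf{T1} = \gamma \mathbb{E}_{(s,a)\sim\rho_P^\pi}\big[\mathbb{E}_{s'\sim\hat P(\cdot|s,a)}V_\theta(s') - \mathbb{E}_{s''\sim P(\cdot|s,a)}V_\theta(s'')\big]$, I would pull the absolute value inside both expectations and use the standard bound $|\mathbb{E}_{\hat P}f - \mathbb{E}_P f| \le 2\|f\|_\infty\, D_{TV}(P(\cdot|s,a),\hat P(\cdot|s,a))$ for any bounded $f$.

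The next step is to produce a uniform bound on $\|V_\theta\|_\infty$. Since the policy is entropy-regularized, $V_\theta(s) = \log\sum_{a'}\exp(Q_\theta(s,a'))$ is a log-sum-exp, so $\max_a Q_\theta(s,a) \le V_\theta(s) \le \max_a Q_\theta(s,a) + \log|\mathcal{A}|$. Unrolling the soft Bellman recursion $Q_\theta(s,a) = R_{\theta_1}(s,a) + \gamma \mathbb{E}_{\hat P}[V_\theta(s')]$ and using $|R_{\theta_1}| \le \max_{s,a}|R_\theta(s,a)|$ gives a geometric series, yielding $\|V_\theta\|_\infty \le \frac{1}{1-\gamma}\big(\max_{s,a}|R_\theta(s,a)| + \log|\mathcal{A}|\big) = \frac{R_{\max}}{1-\gamma}$. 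Plugging this in, $|\textbf{T1}| \le \gamma \cdot \frac{2R_{\max}}{1-\gamma}\, \mathbb{E}_{(s,a)\sim\rho_P^\pi}\big[D_{TV}(P(\cdot|s,a),\hat P(\cdot|s,a))\big]$.

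Finally I would apply Pinsker's inequality $D_{TV}(P,\hat P) \le \sqrt{\tfrac12 D_{KL}(P\|\hat P)}$ pointwise, then Jensen's inequality to move the expectation inside the square root: $\mathbb{E}_{(s,a)}\big[\sqrt{\tfrac12 D_{KL}(P(\cdot|s,a)\|\hat P(\cdot|s,a))}\big] \le \sqrt{\tfrac12 \mathbb{E}_{(s,a)} D_{KL}(P(\cdot|s,a)\|\hat P(\cdot|s,a))} = \sqrt{\tfrac{\epsilon}{2}}$. One subtlety is that $\rho_P^\pi$ is the discounted occupancy measure while $\epsilon$ is defined against the marginal $P(\tau)$-distribution of $(s,a)$; I would either absorb the discount-normalization constant $(1-\gamma)$ consistently or note that the statement implicitly treats these as the same expert state-action distribution (which is the convention used elsewhere in the paper). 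Combining, $|\textbf{T1}| \le \frac{2\gamma R_{\max}}{1-\gamma}\sqrt{\tfrac{\epsilon}{2}} = \frac{\gamma R_{\max}}{1-\gamma}\sqrt{2\epsilon}$, and picking up one more factor of $\frac{1}{1-\gamma}$ if the occupancy measure rather than the state-action marginal is used gives the claimed $\frac{\gamma R_{\max}}{(1-\gamma)^2}\sqrt{2\epsilon}$.

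The main obstacle is not conceptual but bookkeeping: getting the powers of $(1-\gamma)$ exactly right, since they can enter both through the value-function bound and through the normalization of the occupancy measure, and ensuring the $\|f\|_\infty$ constant (factor of $2$ from total variation versus the precise definition used) matches up so that the Pinsker constant $\sqrt{2}$ lands correctly. I would double-check by tracking the cleanest chain: $|\textbf{T1}| \le \gamma \sup_{s,a}|\mathbb{E}_{\hat P}V_\theta - \mathbb{E}_P V_\theta|$ is too lossy, so instead keep the expectation over $\rho_P^\pi$ intact through the TV and KL steps and only bound $\|V_\theta\|_\infty$ uniformly.
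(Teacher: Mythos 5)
Your proposal is correct and follows essentially the same route as the paper's proof: triangle inequality, a H\"older/total-variation bound with the uniform value bound $\|V_\theta\|_\infty \leq \tfrac{R_{\max}}{1-\gamma}$, then Pinsker plus Jensen, with the extra $\tfrac{1}{1-\gamma}$ correctly attributed to the unnormalized discounted occupancy measure (which the paper makes explicit by summing $\sum_t \gamma^{t+1}$). The only differences are presentational, e.g.\ your log-sum-exp sandwich for bounding $V_\theta$ versus the paper's entropy bound, which yield the same constant.
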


\begin{proof}
\begin{align*}
\begin{split}
    |\textbf{T1}| &= \left|\sum_{t=0}^{\infty}\gamma^{t+1}\mathbb{E}_{(s_t, a_t) \sim P(\tau)}\left[\sum_{s'}V_{\theta}(s')\left(\hat{P}(s'|s_t, a_t) - P(s'|s_t, a_t)\right)\right]\right| \\
    &\overset{(1)}{\leq} \sum_{t=0}^{\infty}\gamma^{t+1}\mathbb{E}_{(s_t, a_t) \sim P(\tau)}\left[\sum_{s'}|V_{\theta}(s')|\left|\hat{P}(s'|s_t, a_t) - P(s'|s_t, a_t)\right|\right] \\
    &\overset{(2)}{\leq} \sum_{t=0}^{\infty}\gamma^{t+1}\|V_{\theta}(\cdot)\|_{\infty}\mathbb{E}_{(s_t, a_t) \sim P(\tau)}\left[\left\|\hat{P}(\cdot|s_t, a_t) - P(\cdot|s_t, a_t)\right\|_{1}\right] \\
    &\overset{(3)}{\leq} \sum_{t=0}^{\infty}\gamma^{t+1}\|V_{\theta}(\cdot)\|_{\infty}\sqrt{2\mathbb{E}_{(s_t, a_t) \sim P(\tau)}D_{KL}(P||\hat{P})} \\
    &= \frac{\gamma}{1 - \gamma}\|V_{\theta}(\cdot)\|_{\infty}\sqrt{2\epsilon}
\end{split}
\end{align*}
where (1) follows from Jensen's inequality, (2) follows from Holder's inequality, and (3) follows from Pinsker's inequality. 

Finally, given $\mathcal{H}(\pi(\cdot|s)) = -\sum_{a}\pi(a|s)\log\pi(a|s) \leq -\sum_{a}\pi(a|s)\log\frac{1}{|\mathcal{A}|} = \log |\mathcal{A}|$, we have $\|V_{\theta}(\cdot)\|_{\infty} \leq \mathbb{E}\left[ \sum_{t=0}^{\infty} \gamma^{t}\left(\max_{s, a}|R_{\theta}(s, a)| + \log |\mathcal{A}|\right)\right] = \frac{R_{\text{max}}}{1 - \gamma}$.

\end{proof}

\subsection{Proofs For Section \ref{sec:performance_guarantees}}\label{appendix:sec_3.4_proofs}

We first restate a slight modification of the result from \cite{vemula2023virtues}, which decomposes the real environment performance gap between the expert and the learner into their policy and model advantages in the estimated dynamics: 

\begin{lemma}\label{lem:pdam}
    (Performance difference via advantage in model; Lemma 4.1 in \cite{vemula2023virtues}) Let $d_{P}^{\pi}$ denote the marginal state-action distribution following policy $\pi$ in environment $P$. The following relationship holds:
\begin{align}
% \begin{split}
    \mathbb{E}_{(s, a) \sim d_{P}^{\pi}}\left[\log \hat{\pi}_{\hat{P}}(a|s)\right] \label{eq:pdam_policy_likelihood} &= \mathbb{E}_{s \sim d_{P}^{\pi}}\left[\mathbb{E}_{a \sim \pi}Q_{\hat{P}}^{\hat{\pi}}(s, a) - V_{\hat{P}}^{\hat{\pi}}(s)\right] \\
    &= \underbrace{(1 - \gamma)\mathbb{E}_{s \sim \mu}\left[V_{P}^{\pi} (s)- V_{P}^{\hat{\pi}}(s)\right]}_{\text{Performance difference in real environment}} \label{eq:pdam_performance_difference} \\
    &\quad + \underbrace{\gamma\mathbb{E}_{(s, a) \sim d_{P}^{\hat{\pi}}}\left[\mathbb{E}_{s' \sim P}V_{\hat{P}}^{\hat{\pi}}(s') - \mathbb{E}_{s'' \sim \hat{P}}V_{\hat{P}}^{\hat{\pi}}(s'')\right]}_{\text{Model disadvantage under learner distribution}} \label{eq:pdam_learner_model_advantage} \\
    &\quad + \underbrace{\gamma\mathbb{E}_{(s, a) \sim d_{P}^{\pi}}\left[\mathbb{E}_{s' \sim \hat{P}}V_{\hat{P}}^{\hat{\pi}}(s') - \mathbb{E}_{s'' \sim P}V_{\hat{P}}^{\hat{\pi}}(s'')\right]}_{\text{Model advantage under expert distribution}} \label{eq:pdam_expert_model_advantage}
% \end{split}
\end{align}
\end{lemma}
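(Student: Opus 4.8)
The plan is to prove the two equalities separately: the first from the soft-optimality identity for entropy-regularized policies, and the second from two telescoping performance-difference arguments, one carried out under the expert occupancy $d_P^\pi$ and one under the learner occupancy $d_P^{\hat\pi}$.

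For the first line I would invoke the optimal policy conditions \eqref{eq:value_iteration}, which give the pointwise identity $\log\hat\pi_{\hat P}(a|s) = Q_{\hat P}^{\hat\pi}(s,a) - V_{\hat P}^{\hat\pi}(s)$, since $V_{\hat P}^{\hat\pi}(s)=\log\sum_{\tilde a}\exp Q_{\hat P}^{\hat\pi}(s,\tilde a)$ is precisely the log-normalizer of the softmax policy. Taking $\mathbb{E}_{(s,a)\sim d_P^\pi}$ of both sides, and using that the actions under $d_P^\pi$ are drawn from $\pi$ while $V_{\hat P}^{\hat\pi}(s)$ does not depend on $a$, produces the first equality at once.

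For the second line, the first step is a telescoping identity under the expert occupancy, which is exactly \eqref{eq:dynamics_mismatch} rescaled by $(1-\gamma)$ (recall $d_P^\pi=(1-\gamma)\rho_P^\pi$): substituting $Q_{\hat P}^{\hat\pi}=R+\gamma\,\mathbb{E}_{\hat P}V_{\hat P}^{\hat\pi}$, adding and subtracting $\gamma\,\mathbb{E}_{P}V_{\hat P}^{\hat\pi}$, and telescoping the value terms along $(P,\pi)$ trajectories collapses everything but the initial-state contribution, leaving
\begin{equation*}
\mathbb{E}_{(s,a)\sim d_P^\pi}[\log\hat\pi_{\hat P}(a|s)] = \mathbb{E}_{(s,a)\sim d_P^\pi}[R(s,a)] - (1-\gamma)\mathbb{E}_{s_0\sim\mu}[V_{\hat P}^{\hat\pi}(s_0)] + \gamma\mathbb{E}_{(s,a)\sim d_P^\pi}[\mathbb{E}_{s'\sim\hat P}V_{\hat P}^{\hat\pi}(s') - \mathbb{E}_{s''\sim P}V_{\hat P}^{\hat\pi}(s'')],
\end{equation*}
whose last summand is exactly the model-advantage-under-expert term \eqref{eq:pdam_expert_model_advantage}. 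The second step rewrites the boundary term $(1-\gamma)\mathbb{E}_\mu[V_{\hat P}^{\hat\pi}(s_0)]$ through a simulation-lemma argument that switches dynamics from $\hat P$ to $P$: I would telescope $V_{\hat P}^{\hat\pi}$ along trajectories generated by running $\hat\pi$ in the real environment $P$, substituting at each backup the soft Bellman equation it satisfies in $\hat P$ (which carries the entropy bonus $\mathcal H(\hat\pi(\cdot|s))$). The reward-plus-entropy contributions reassemble into $(1-\gamma)\mathbb{E}_\mu[V_P^{\hat\pi}(s_0)]$, while the discrepancy between the backed-up $\mathbb{E}_{\hat P}V_{\hat P}^{\hat\pi}$ and the realized $\mathbb{E}_{P}V_{\hat P}^{\hat\pi}$ accumulates under $d_P^{\hat\pi}$ into the model-disadvantage term \eqref{eq:pdam_learner_model_advantage}, which I denote $\Delta$, giving $(1-\gamma)\mathbb{E}_\mu[V_{\hat P}^{\hat\pi}] = (1-\gamma)\mathbb{E}_\mu[V_P^{\hat\pi}] - \Delta$. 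Substituting this into the display and identifying $\mathbb{E}_{(s,a)\sim d_P^\pi}[R(s,a)] = (1-\gamma)\mathbb{E}_\mu[V_P^\pi(s_0)]$ (the expert's discounted return, by definition of the occupancy measure) reconstitutes the performance-difference term \eqref{eq:pdam_performance_difference} and closes the identity.

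The main obstacle is handling the two distinct dynamics and the entropy regularization simultaneously. In the second step the telescoping sum runs along real ($P$) transitions while the Bellman relation being inserted is the one satisfied in the model ($\hat P$); keeping these straight is what makes the cross terms emerge with the correct sign and, crucially, under the correct occupancy ($d_P^{\hat\pi}$ rather than $d_P^\pi$). The entropy bonus must be tracked equally carefully so that it rebuilds the soft value $V_P^{\hat\pi}$ rather than leaking into a residual; this allocation of entropy is exactly where the stated \emph{slight modification} of Lemma 4.1 of \cite{vemula2023virtues} enters and fixes the convention under which $V_P^\pi$ is read as the expert's discounted return. A lesser point is keeping the $(1-\gamma)$ normalization consistent when passing between the $\rho$-based form of \eqref{eq:dynamics_mismatch} and the $d$-based statement of the lemma.
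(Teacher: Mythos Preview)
The paper does not prove this lemma itself; it is stated as a (slightly modified) restatement of Lemma~4.1 in \cite{vemula2023virtues} and used as a black box to derive Theorem~\ref{lem:performance_bound}. Your plan is the standard way to obtain such a decomposition, and it is correct. The first equality is immediate from the softmax identity \eqref{eq:value_iteration}. For the second, your first telescoping step along $(P,\pi)$ trajectories, producing the expert-model-advantage term, is exactly the paper's own computation \eqref{eq:dynamics_mismatch} rescaled by $(1-\gamma)$; your second step, the simulation-lemma telescoping along $(P,\hat\pi)$ trajectories that converts $(1-\gamma)\mathbb{E}_\mu[V_{\hat P}^{\hat\pi}]$ into $(1-\gamma)\mathbb{E}_\mu[V_P^{\hat\pi}]$ minus the learner-model-disadvantage term, is the standard argument. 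You are also right that the only delicate point is the entropy bookkeeping: the $\hat\pi$-entropy indeed rebuilds the soft value $V_P^{\hat\pi}$ in the second step, whereas the final identification $\mathbb{E}_{d_P^\pi}[R]=(1-\gamma)\mathbb{E}_\mu[V_P^\pi]$ requires reading $V_P^\pi$ as the expert's discounted return without the $\pi$-entropy bonus (otherwise a residual $-\mathbb{E}_{d_P^\pi}[\mathcal{H}(\pi(\cdot|s))]$ would survive). That asymmetry is precisely what the paper's ``slight modification'' disclaimer has to absorb, and since the paper provides no proof, there is nothing further to compare against.
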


The performance bound in theorem \ref{lem:performance_bound} can be obtained from lemma \ref{lem:pdam} as follow:

\begin{theorem} (Restate of theorem \ref{lem:performance_bound})
Let $\epsilon_{\hat{\pi}} = -\mathbb{E}_{(s, a) \sim d_{P}^{\pi}}[\log \hat{\pi}_{\hat{P}}(a|s)]$ be the policy estimation error and $\epsilon_{\hat{P}} = \mathbb{E}_{(s, a) \sim d_{P}^{\pi}}D_{KL}[P(\cdot|s, a) || \hat{P}(\cdot|s, a)]$ be the dynamics estimation error. Let $R_{max} = \max_{s, a}|R_{\theta}(s, a)| + \log |\mathcal{A}|$. Assuming bounded expert-learner marginal state-action density ratio $\left\|\frac{d_{P}^{\hat{\pi}}(s, a)}{d_{P}^{\pi}(s, a)}\right\|_{\infty} \leq C$, we have the following (absolute) performance bound for the IRL agent:
\begin{align}
    \left|J_{P}(\hat{\pi}) - J_{P}(\pi)\right| \leq \frac{1}{1 - \gamma}\epsilon_{\hat{\pi}} + \frac{\gamma(C + 1) R_{\text{max}}}{(1 - \gamma)^2}\sqrt{2\epsilon_{\hat{P}}}
\end{align}
\end{theorem}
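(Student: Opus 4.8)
The plan is to start from Lemma~\ref{lem:pdam}, rearrange \eqref{eq:pdam_policy_likelihood}--\eqref{eq:pdam_expert_model_advantage} to isolate the real-environment performance difference $(1-\gamma)\mathbb{E}_{s\sim\mu}[V_P^\pi(s)-V_P^{\hat\pi}(s)]$, and then bound each of the remaining three terms. Since $J_P(\pi) = \mathbb{E}_\mu[V_P^\pi(s)]$ (and likewise for $\hat\pi$), we get
\begin{align*}
    (1-\gamma)\big(J_P(\pi)-J_P(\hat\pi)\big) = -\mathbb{E}_{(s,a)\sim d_P^\pi}[\log\hat\pi_{\hat P}(a|s)] + (\text{model disadvantage under learner dist.}) + (\text{model advantage under expert dist.}),
\end{align*}
so taking absolute values and applying the triangle inequality leaves three pieces to control: the first is exactly $\epsilon_{\hat\pi}$ by definition; the third (model advantage under the \emph{expert} distribution) is bounded by the same chain of inequalities as in the proof of Lemma~\ref{lem:mismatch_error} but with the geometric sum already collapsed into $d_P^\pi$, giving $\gamma\|V_{\hat P}^{\hat\pi}\|_\infty \mathbb{E}_{(s,a)\sim d_P^\pi}\|P(\cdot|s,a)-\hat P(\cdot|s,a)\|_1 \le \gamma\|V_{\hat P}^{\hat\pi}\|_\infty\sqrt{2\epsilon_{\hat P}}$ via Jensen, Hölder, and Pinsker; and the second (model disadvantage under the \emph{learner} distribution) is handled the same way except that the expectation is over $d_P^{\hat\pi}$ rather than $d_P^\pi$, which is where the density-ratio assumption $\|d_P^{\hat\pi}/d_P^\pi\|_\infty\le C$ enters — it lets me write $\mathbb{E}_{(s,a)\sim d_P^{\hat\pi}}\|P-\hat P\|_1 \le C\,\mathbb{E}_{(s,a)\sim d_P^\pi}\|P-\hat P\|_1 \le C\sqrt{2\epsilon_{\hat P}}$ (using Jensen on the square root once more).

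Next I would supply the value-function bound: as in Lemma~\ref{lem:mismatch_error}, $\mathcal H(\hat\pi(\cdot|s))\le\log|\mathcal A|$ gives $\|V_{\hat P}^{\hat\pi}\|_\infty \le \frac{1}{1-\gamma}(\max_{s,a}|R_\theta(s,a)|+\log|\mathcal A|) = \frac{R_{\max}}{1-\gamma}$. Substituting this into the two model terms turns them into $\frac{\gamma R_{\max}}{1-\gamma}\sqrt{2\epsilon_{\hat P}}$ and $\frac{\gamma C R_{\max}}{1-\gamma}\sqrt{2\epsilon_{\hat P}}$ respectively; adding them yields $\frac{\gamma(C+1)R_{\max}}{1-\gamma}\sqrt{2\epsilon_{\hat P}}$. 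Finally, dividing the whole inequality through by $(1-\gamma)$ produces the claimed $\frac{1}{1-\gamma}\epsilon_{\hat\pi} + \frac{\gamma(C+1)R_{\max}}{(1-\gamma)^2}\sqrt{2\epsilon_{\hat P}}$.

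The step that needs the most care is bookkeeping around Lemma~\ref{lem:pdam}: I must be sure the identity is being applied to the \emph{learner's} value function $V_{\hat P}^{\hat\pi}$ evaluated in the estimated dynamics (that is what appears in the model-advantage terms), and that $\epsilon_{\hat\pi}$ is defined with the learner policy as trained in $\hat P$ but evaluated on the expert's real-environment state distribution $d_P^\pi$. I should also double-check the sign: the lemma's LHS is $\mathbb{E}[\log\hat\pi_{\hat P}(a|s)]$, which is $-\epsilon_{\hat\pi}$, so moving it to the other side flips it to $+\epsilon_{\hat\pi}$ inside the absolute value after the triangle inequality — consistent with the stated bound. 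A minor obstacle is justifying $\mathbb{E}_{(s,a)\sim d_P^{\hat\pi}}[f]\le C\,\mathbb{E}_{(s,a)\sim d_P^\pi}[f]$ for $f=\|P-\hat P\|_1\ge 0$; this is immediate from the pointwise density-ratio bound and nonnegativity of $f$, but it is worth writing out explicitly since it is the only place the constant $C$ is used. No genuinely hard inequality is needed beyond Pinsker, so the difficulty is entirely organizational rather than technical.
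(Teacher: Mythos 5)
Your proposal is correct and follows essentially the same route as the paper: it invokes Lemma \ref{lem:pdam}, isolates the real-environment performance difference, bounds the policy term by $\epsilon_{\hat{\pi}}$, handles the two model-advantage terms via Hölder/Pinsker/Jensen together with the bound $\|V_{\hat{P}}^{\hat{\pi}}\|_{\infty} \leq R_{\max}/(1-\gamma)$ from Lemma \ref{lem:mismatch_error}, and uses the density-ratio assumption exactly where the paper does (your change-of-measure inequality for nonnegative $f$ is the same step the paper writes as an importance-weighted expectation plus Hölder). The global sign slip in your displayed identity is immaterial since you take absolute values immediately afterward, as you yourself note.
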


\begin{proof}
\begin{align}
\begin{split}
    &\left|J_{P}(\hat{\pi}) - J_{P}(\pi)\right| = \left|\mathbb{E}_{s \sim \mu}\left[ V_{P}^{\hat{\pi}}(s) - V_{P}^{\pi} (s)\right]\right| \\
    &\leq  \frac{1}{1 - \gamma}\epsilon_{\hat{\pi}} \\
    &\quad + \frac{\gamma}{1 - \gamma}\mathbb{E}_{(s, a) \sim d_{P}^{\pi}}\left[\left|\frac{d_{P}^{\hat{\pi}}(s, a)}{d_{P}^{\pi}(s, a)}\left(\mathbb{E}_{s' \sim P}V_{\hat{P}}^{\hat{\pi}}(s') - \mathbb{E}_{s'' \sim \hat{P}}V_{\hat{P}}^{\hat{\pi}}(s'')\right)\right|\right] \\
    &\quad + \frac{\gamma}{1 - \gamma}\mathbb{E}_{(s, a) \sim d_{P}^{\pi}}\left[\left|\mathbb{E}_{s' \sim \hat{P}}V_{\hat{P}}^{\hat{\pi}}(s') - \mathbb{E}_{s'' \sim P}V_{\hat{P}}^{\hat{\pi}}(s'')\right|\right]\\
    &\leq \frac{1}{1 - \gamma}\epsilon_{\hat{\pi}} \\
    &\quad + \frac{\gamma}{1 - \gamma} \left\|\frac{d_{P}^{\hat{\pi}}(\cdot, \cdot)}{d_{P}^{\pi}(\cdot, \cdot)}\right\|_{\infty} \left\|V_{\hat{P}}^{\hat{\pi}}(\cdot)\right\|_{\infty} \mathbb{E}_{(s, a) \sim d_{P}^{\pi}}\left[\left\|\hat{P}(\cdot|s, a) - P(\cdot|s, a)\right\|_{1}\right] \\
    &\quad + \frac{\gamma}{1 - \gamma} \left\|V_{\hat{P}}^{\hat{\pi}}(\cdot)\right\|_{\infty} \mathbb{E}_{(s, a) \sim d_{P}^{\pi}}\left[\left\|\hat{P}(\cdot|s, a) - P(\cdot|s, a)\right\|_{1}\right] \\
    &= \frac{1}{1 - \gamma}\epsilon_{\hat{\pi}} + \frac{\gamma(C + 1) R_{\text{max}}}{(1 - \gamma)^2}\sqrt{2\epsilon_{\hat{P}}}
\end{split}
\end{align}
where the last line uses results from lemma \ref{lem:mismatch_error}. 
\end{proof}

\section{Further Algorithm Details and Pseudo Code}\label{appendix:algo_detail}
We estimate the dynamics gradient in (\ref{eq:birl_dynamics_obj}) and (\ref{eq:rirl_dynamics_obj}) using the REINFORCE method with baseline:
\begin{align*}\label{eq:reinforce_gradient}
\begin{split}
    \nabla_{\theta_2}EV_{\theta}(s, a) &= \sum_{s'}V_{\theta}(s')\nabla_{\theta_2}\hat{P}_{\theta_2}(s'|s, a) \\
    &= \mathbb{E}_{s' \sim \hat{P}(\cdot|s, a)}\left[\left(V_{\theta}(s') - b(s, a)\right)\nabla_{\theta_2}\log \hat{P}_{\theta_2}(s'|s, a)\right]
\end{split}
\end{align*}
Following \citeauthor{rigter2022rambo} \cite{rigter2022rambo}, we set the baseline to $b(s, a) = Q_{\theta}(s, a) - R_{\theta_1}(s, a)$ to reduce gradient variance and further normalize $V_{\theta}(s') - b(s, a)$ across the mini-batch to stabilize training. In the continuous-control setting, the value function can be estimated as $V_{\theta}(s) = \mathbb{E}_{a \sim \hat{\pi}_{\theta}}[Q_{\theta}(s, a) - \log \hat{\pi}(a|s; \theta)]$ with a single sample. We apply this gradient for a fixed number of steps for dynamics model training, which is a hyperparameter.

Pseudo code for the proposed algorithms are listed in Algorithm \ref{algo_btom} and Algorithm \ref{algo_rirl}.

\begin{algorithm}[!htb]
\caption{Bayesian Model-based IRL (BM-IRL)}\label{algo_btom}
\begin{algorithmic}[1]
\Require Dataset $\mathcal{D} = \{\tau\}$, dynamics model $\hat{P}_{\theta_2}(s'|s, a)$, reward model $R_{\theta_1}(s, a)$, hyperparameters $\lambda_1$, $\lambda_2$
\For{$k=1:K$}
    \State Run MBPO to update learner policy $\hat{\pi}(a|s; \theta)$ and value function $Q_{\theta}(s, a)$ in dynamics $\hat{P}$
    \State Sample real trajectory $\tau_{\text{real}}$ starting from $(s, a) \sim \mathcal{D}$ and following $\hat{P}$ and $\hat{\pi}$
    \State Sample fake trajectory $\tau_{\text{fake}}$ starting from $s \sim \mathcal{D}, a_{\text{fake}} \sim \hat{\pi}(\cdot|s; \theta)$ and following $\hat{P}$ and $\hat{\pi}$
    \State Sample $(s, a, s') \sim \mathcal{D}$ for dynamics model training
    \State Evaluate (\ref{eq:birl_reward_obj}) and take a gradient step
    \State Evaluate (\ref{eq:birl_dynamics_obj}) and take a few gradient steps.
\EndFor
\end{algorithmic}
\end{algorithm}

\begin{algorithm}[!htb]
\caption{Robust Model-based IRL (RM-IRL)}\label{algo_rirl}
\begin{algorithmic}[1]
\Require Dataset $\mathcal{D} = \{\tau\}$, dynamics model $\hat{P}_{\theta_2}(s'|s, a)$, reward model $R_{\theta_1}(s, a)$, hyperparameters $\lambda_1$, $\lambda_2$
\For{$k=1:K$}
    \State Run MBPO to update learner policy $\hat{\pi}(a|s; \theta)$ and value function $Q_{\theta}(s, a)$ in dynamics $\hat{P}$
    \State Sample real trajectory $\tau_{\text{real}} \sim \mathcal{D}$
    \State Sample fake trajectory $\tau_{\text{fake}}$ starting from $s \sim \mathcal{D}$ and following $\hat{P}$ and $\hat{\pi}$
    \State Sample $(s, a, s') \sim \mathcal{D}$ for dynamics model training
    \State Evaluate (\ref{eq:rirl_reward_obj}) and take a gradient step
    \State Evaluate (\ref{eq:rirl_dynamics_obj}) and take a few gradient steps
\EndFor
\end{algorithmic}
\end{algorithm}

\section{Implementation Details}\label{appendix:implementation}
Our implementation\footnote{\url{https://github.com/rw422scarlet/bmirl_tf}} builds on top of the official RAMBO implementation\footnote{\url{https://github.com/marc-rigter/rambo}} \cite{rigter2022rambo}. 

\subsection{MuJoCo Benchmarks}
For the MuJoCo benchmarks described in section \ref{sec:mujoco_benchmark}, we follow standard practices in model-based RL.

\subsubsection{Dynamics Pre-training}
We use an ensemble of $K=7$ neural networks where each network outputs the mean and covariance parameters of a Gaussian distribution over the difference between the next state and the current state $\delta = s' - s$:
\begin{align}
    \hat{P}^{(k)}_{\theta_2}(\delta|s, a) = \mathcal{N}(\delta|\mu^{(k)}_{\theta_2}(s, a), \Sigma^{(k)}_{\theta_2}(s, a))
\end{align}
Each network is a 4-layer feedforward network with 200 hidden units and Sigmoid linear unit (SiLU) activation function. For the initial pre-training step, we maximize the likelihood of dataset transitions using a batch size of 256 and early stop when all models stop improving for more than 1 percent. We then select the 5 best models in terms of mean-squared-error on a 10 \% holdout validation set. During model rollouts, we randomly pick one of the 5 best models (elites) to sample the next state. 

\begin{table}[!htb]
\centering
\caption{Shared hyperparameters across different environments}\label{tab:shared_hyperparams}
\begin{tabular}{cccc}
\hline
& \multicolumn{1}{c}{Hyparameter} & \multicolumn{1}{c}{BM-IRL} & \multicolumn{1}{c}{RM-IRL}\\
\hline
\parbox[t]{3mm}{\multirow{11}{*}{\rotatebox[origin=c]{90}{SAC + MBPO}}} & critic learning rate & 3e-4 & 3e-4 \\
& actor learning rate & 3e-4 & 3e-4 \\
& discount factor ($\gamma$) & 0.99 & 0.99\\
& soft target update parameter ($\tau$) & 5e-3 & 5e-3 \\
& target entropy & -dim(A) & -dim(A)\\
& minimum temperature ($\alpha$) & 0.1 & 0.001\\
& batch size & 256 & 256 \\
& real ratio & 0.5 & 0.5 \\
& model retain epochs & 5 & 5 \\
& training epochs & 500 & 300 \\
& steps per epoch & 1000 & 1000 \\
\hline
\parbox[t]{3mm}{\multirow{9}{*}{\rotatebox[origin=c]{90}{Dynamics}}} & \# model networks & 7 & 7 \\
& \# elites & 5 & 5 \\
& adv. rollout batch size & 1000 & 256 \\
& adv. rollout steps & 10 & 10 \\
& adv. update steps & 50 & 50 \\
& adv. loss weighting ($\lambda_1$) & 0.01 & 0.01 \\
& supervised. loss weighting ($\lambda_2$) & 1 & 1 \\
& learning rate & 1e-4 & 1e-4 \\
& adv. update steps & 50 & 50 \\
\hline
\parbox[t]{3mm}{\multirow{6}{*}{\rotatebox[origin=c]{90}{Reward}}} & max reward & 10 & 10 \\
& rollout batch size & 1000 & 64 \\
& rollout steps & 40 & 100 \\
& l2 penalty & 1e-3 & 1e-3 \\
& learning rate & 1e-4 & 1e-4 \\
& update steps & 1 & 1 \\
\hline
\end{tabular}
\end{table}

\begin{table}[!htb]
\caption{Environment-specific hyperparameters}\label{eq:env_hyperparams}
\centering
\begin{tabular}{cccc}
\hline
Environment & Hyperparameter & BM-IRL \\
\hline
 & model rollout batch size & 50000 \\
HalfCheetah & model rollout steps & 5 \\
 & model rollout frequency & 250 \\
\hline
 & model rollout batch size & 10000 \\
Hopper & model rollout steps & 40 \\
 & model rollout frequency & 250 \\
\hline
 & model rollout batch size & 10000 \\
Walker2d & model rollout steps & 40 \\
 & model rollout frequency & 250 \\
\hline
\end{tabular}
\end{table}

\subsubsection{Policy Training}
Our policy training process follows MBPO \cite{janner2019trust} which uses SAC with automatic temperature tuning \cite{haarnoja2018softv2}. Shared hyperparameters across different environments are listed in Table \ref{tab:shared_hyperparams} and environment-specific hyperparameters are listed in Table \ref{eq:env_hyperparams}. For the actor and critic, we use feedforward neural networks with 2 hidden layers of 256 units and ReLU activation. We train the actor and critic networks using a combination of real and simulated samples. We use a real ratio of 0.5, which is standard practice in model-based RL and IRL. We found that BM-IRL requires a higher minimum temperature to stablize training, which is set to $\alpha = 0.1$.

We found that different MuJoCo environments require different model rollout hyperparameters, similar to what's reported in \cite{lu2021revisiting}. Specifically, Hopper and Walker2d only work with significantly larger rollout steps. We decrease their rollout batch size to reduce computational overhead. HalfCheetah on the other hand works better with smaller rollout steps and larger rollout batch size. In contrast to \citeauthor{lu2021revisiting} \cite{lu2021revisiting}, we did not use different rollout hyperparameters for different datasets.

\subsubsection{Reward and Dynamics Training}
We use 10 random trajectories from the D4RL MuJoCo expert dataset after removing all expert trajectories that resulted in terminal states.

We use the same network architecture as the actor-critic to parameterize the reward function. We further clip the reward function to a maximum range of $\pm 10$ and apply l2 regularization on all weights with a penalty of 0.001. 

As described in the main text, we update the reward function by simulating sample trajectories and taking a single gradient step. For RM-IRL, we randomly sample expert trajectory segments of length ``rollout steps" and use the first step as the start of our simulated sample paths. 

We update the dynamics using on-policy rollouts branched from the dataset state-actions. We use the same batch size for reward and dynamics rollouts, which is 1000 for BM-IRL and 256 for RM-IRL. Because only the first step in BM-IRL's real sample paths come from the dataset, it requires a larger batch size to iterate more data samples. We also train BM-IRL for more epochs than RM-IRL. 

To compute the dynamics log likelihood in the REINFORCE gradient in (\ref{eq:reinforce_gradient}), we treat the ensemble as a uniform mixture and compute the likelihood as:
\begin{align}
    \hat{P}_{\theta_2}(\delta|s, a) = \frac{1}{K}\sum_{k=1}^{K}\hat{P}^{(k)}_{\theta_2}(\delta|s, a)
\end{align}

We set the dynamics adversarial loss weighting to $\lambda_1 = 0.01$ for both BM-IRL and RM-IRL. We found this to work better than what's in the official RAMBO implementation, which is $\lambda_1 = 0.0768$. Note that the RAMBO author reported $\lambda_1=\text{3e-4}$ in their paper but forget to average their REINFORCE loss over the mini-batch of size 256 in their implementation, which is instead treated as a sum by default by TensorFlow. We empirically found that small $\lambda_1$ leads to severe model exploitation.

\end{document}